\DeclareFontFamily{OT1}{pzc}{}
\DeclareFontShape{OT1}{pzc}{m}{it}{<-> s * [1.10] pzcmi7t}{}
\DeclareMathAlphabet{\mathpzc}{OT1}{pzc}{m}{it}
\newcommand{\commentout}[1]{}
\newcommand{\nwc}{\newcommand}
\nwc{\ba}{\begin{array}}
\nwc{\bal}{\begin{align}}
\nwc{\bea}{\begin{eqnarray}}
\nwc{\beq}{\begin{eqnarray}}
\nwc{\bean}{\begin{eqnarray*}}
\nwc{\beqn}{\begin{eqnarray*}}
\nwc{\beqast}{\begin{eqnarray*}}
\nwc{\ea}{\end{array}}
\nwc{\eal}{\end{align}}
\nwc{\eea}{\end{eqnarray}}
\nwc{\eeq}{\end{eqnarray}}
\nwc{\eean}{\end{eqnarray*}}
\nwc{\eeqn}{\end{eqnarray*}}
\nwc{\eeqast}{\end{eqnarray*}}
\nwc{\ep}{\varepsilon}
\nwc{\ept}{\epsilon}
\newcommand{\NN}{\mathbb{N}}
\newcommand{\RR}{\mathbb{R}}
\newcommand{\ZZ}{\mathbb{Z}}
\newcommand{\EE}{\mathbb{E}}
\newcommand{\PP}{\mathbb{P}}
\nwc{\calO}{\mathcal{O}}
\nwc{\calZ}{\mathcal{Z}}
\nwc{\calM}{\mathcal{M}}
\nwc{\calX}{\mathcal{X}}
\nwc{\calD}{\mathcal{D}}
\nwc{\calE}{\mathcal{E}}
\nwc{\calP}{\mathcal{P}}
\nwc{\calH}{\mathcal{H}}
\nwc{\calK}{\mathcal{K}}
\nwc{\calT}{\mathcal{T}}
\nwc{\calS}{\mathcal{S}}
\nwc{\tcalT}{\tilde{\calT}}
\nwc{\calA}{\mathcal{A}}
\nwc{\calB}{\mathcal{B}}
\nwc{\calI}{\mathcal{I}}
\nwc{\calC}{\mathcal{C}}
\nwc{\AS}{\calA_s}
\nwc{\ASL}{\AS^\ell}
\nwc{\ASLINF}{\AS^{\ell,\infty}}
\nwc{\Agamma}{\calA_{\gamma}}
\nwc{\BS}{\calB_s}
\nwc{\BSL}{\BS^\ell}
\nwc{\bbI}{\mathbb{I}}
\nwc{\tgamma}{\tilde{\gamma}}
\nwc{\proj}{{\rm Proj}}
\nwc{\lognn}{\frac{\log n}{n}}
\nwc{\lognnnf}{({\log n}/{n})}
\nwc{\Lam}{\Lambda}
\nwc{\hLam}{\widehat{\Lambda}}
\nwc{\hy}{\widehat{y}}
\nwc{\lcol}{\left\|}
\nwc{\rcol}{\right\|}
\nwc{\hcalP}{\widehat{\calP}}
\nwc{\hc}{\widehat{c}}
\nwc{\hP}{\widehat{\calP}}
\nwc{\hS}{\widehat{\Sigma}}
\nwc{\hV}{\widehat{V}}
\nwc{\halpha}{\widehat{\alpha}}
\nwc{\hbeta}{\widehat{\beta}}
\nwc{\hrho}{\widehat{\rho}}
\nwc{\hcalT}{\widehat{\calT}}
\nwc{\hn}{\widehat{n}}
\DeclareMathOperator*{\argmin}{arg\,min}
\nwc{\trace}{{\rm trace}}
\nwc{\rank}{{\rm rank}}
\nwc{\intdim}{{\rm intdim}}
\nwc{\din}{d_{\rm in}}
\nwc{\jmin}{j_{\min}}
\nwc{\jmax}{j_{\max}}
\nwc{\cjk}{{c_{j,k}}}
\nwc{\hcjk}{{\hc_{j,k}}}
\nwc{\Sjk}{{\Sigma_{j,k}}}
\nwc{\hSjk}{{\hS_{j,k}}}
\nwc{\Cjk}{{C_{j,k}}}
\nwc{\hCjk}{{\widehat{C}_{j,k}}}
\nwc{\calPjk}{{\calP_{j,k}}}
\nwc{\hcalPjk}{{\widehat{\calP}_{j,k}}}
\nwc{\Vjk}{{V_{j,k}}}
\nwc{\hVjk}{{\widehat{V}_{j,k}}}
\nwc{\Deltajk}{{\Delta_{j,k}}}
\nwc{\Deltajkell}{{\Delta^\ell_{j,k}}}
\nwc{\hDeltajk}{{\widehat{\Delta}_{j,k}}}
\nwc{\hDeltajkell}{{\widehat{\Delta}^\ell_{j,k}}}
\nwc{\bDeltajkell}{{\bar{\Delta}^\ell_{j,k}}}
\nwc{\jkp}{{{j+1,k'}}}
\nwc{\chijk}{{\mathbf{1}_{j,k}}}
\nwc{\chic}{{\mathbf{1}}}
\nwc{\hcalM}{\widehat\calM}
\nwc{\Voronoi}{{\rm Voronoi}}
\nwc{\fjk}{{f_{j,k}}}
\nwc{\fjkell}{f^\ell_{j,k}}
\nwc{\gjk}{{g_{j,k}}}
\nwc{\gjkell}{{g_{j,k}^\ell}}
\nwc{\alphajk}{{\alpha_{j,k}}}
\nwc{\betajk}{{\beta_{j,k}}}
\nwc{\yjk}{{y_{j,k}}}
\nwc{\Gjk}{{G_{j,k}}}
\nwc{\rhoM}{{\rho}}
\nwc{\rhojk}{{\rho_{j,k}}}
\nwc{\njk}{{n_{j,k}}}
\nwc{\hnjk}{{\widehat{n}_{j,k}}}
\nwc{\rjk}{{r_{j,k}}}
\nwc{\hrjk}{{\widehat{r}_{j,k}}}
\nwc{\hg}{\widehat{g}}
\nwc{\hf}{\widehat{f}}
\nwc{\hfjk}{{\widehat{f}_{j,k}}}
\nwc{\hfjkell}{{\widehat{f}^\ell_{j,k}}}
\nwc{\hgjk}{{\widehat{g}_{j,k}}}
\nwc{\hgjkell}{{\widehat{g}^\ell_{j,k}}}
\nwc{\halphajk}{{\widehat{\alpha}_{j,k}}}
\nwc{\hbetajk}{{\widehat{\beta}_{j,k}}}
\nwc{\hyjk}{{\widehat{y}_{j,k}}}
\nwc{\hGjk}{{\widehat{G}_{j,k}}}
\nwc{\pijk}{\pi_{j,k}}
\nwc{\hpijk}{\widehat{\pi}_{j,k}}
\nwc{\Qjk}{Q_{j,k}}
\nwc{\hQjk}{\widehat{Q}_{j,k}}
\nwc{\Wjk}{W_{j,k}}
\nwc{\hWjk}{\widehat{W}_{j,k}}
\nwc{\lamjk}{\lambda^{j,k}}
\nwc{\la}{\lambda}
\nwc{\Lamjk}{{\Lambda^{j,k}}}
\nwc{\hLamjk}{{\widehat{\Lambda}^{j,k}}}
\nwc{\hlamjk}{{\widehat{\lambda}^{j,k}}}
\nwc{\byjk}{{\bar{y}_{jk}}}
\nwc{\red}[1]{\textcolor{red}{#1}}
\nwc{\blue}[1]{\textcolor{blue}{#1}}
\nwc{\btheta}{\boldsymbol{\theta}}
\nwc{\Child}{{\mathscr{C}}}
\nwc{\amax}{a_{\rm max}}
\nwc{\amin}{a_{\rm min}}
\nwc{\diam}{{\rm diam}}
\nwc{\fj}{f_{\Lam_j}}
\nwc{\hfj}{\widehat f_{\Lam_j}}
\nwc{\calTn}{\calT^n}
\nwc{\hcalTtaun}{\hcalT_{\tau_n}}
\nwc{\calTfeta}{\calT_{(f,\eta)}}
\nwc{\hcalTeta} {\hcalT_{\eta}}
\nwc{\calTeta}{\calT_{\eta}}
\nwc{\calTtaunb}{\calT_{\tau_n/b}}
\nwc{\calTbtaun}{\calT_{b\tau_n}}
\nwc{\calTfbtaun}{\calT_{(f,b\tau_n)}}
\nwc{\hLameta}{\hLam_{\eta}}
\nwc{\Lameta}{\Lam_{\eta}}
\nwc{\Lamfeta}{\Lam_{(f,\eta)}}
\nwc{\Lambtaun}{\Lam_{b\tau_n}}
\nwc{\Lamtaunb}{\Lam_{\tau_n/b}}
\nwc{\Lamfbtaun}{\Lam_{(f,b\tau_n)}}
\nwc{\hLamtaun}{\hLam_{\tau_n}}
\begin{document}

\title{Multiscale regression on unknown manifolds}

\author{\name Wenjing Liao \email wliao60@gatech.edu	 \\
       \addr School of Mathematics\\
       Georgia Institute of Technology, Atlanta, GA 30313, USA
       \AND
       \name Mauro Maggioni \email mauromaggionijhu@icloud.com \\
       \addr Department of Mathematics, Department of Applied Mathematics and Statistics \\
       Department of Mathematics, Mathematical Institute of Data Science \\
       Johns Hopkins University, Baltimore, MD 21218, USA
       \AND
       \name Stefano Vigogna \email vigogna@dibirs.unige.it \\
       \addr MaLGa Center, Department of Informatics, Bioengineering, Robotics and Systems Engineering\\
       University of Genova, 16145 Genova, Italy
       }

\maketitle

\begin{abstract}
We consider the regression problem of estimating functions on $\RR^D$
but supported on a $d$-dimensional manifold $ \calM \subset \RR^D $ with $ d \ll D $.
Drawing ideas from multi-resolution analysis and nonlinear approximation,
we construct low-dimensional coordinates on $\calM$ at multiple scales,
and perform multiscale regression by local polynomial fitting.
We propose a data-driven wavelet thresholding scheme
that automatically adapts to the unknown regularity of the function,
allowing for efficient estimation of functions exhibiting nonuniform regularity at different locations and scales.
We analyze the generalization error of our method
by proving finite sample bounds in high probability
on rich classes of priors.
Our estimator attains optimal learning rates (up to logarithmic factors)
as if the function was defined on a known Euclidean domain of dimension $d$,
instead of an unknown manifold embedded in $\RR^D$.
The implemented algorithm has quasilinear complexity in the sample size,
with constants linear in $D$ and exponential in $d$.
Our work therefore establishes a new framework for regression on low-dimensional sets embedded in high dimensions,
with fast implementation and strong theoretical guarantees.

\end{abstract}

{\bf Keywords:} Multi-Resolution Analysis, Manifold Learning, Polynomial Regression, Partitioning Estimates, Adaptive Approximation.

\section{Introduction}

High-dimensional data challenge classical statistical models and require new understanding of tradeoffs in accuracy and efficiency.
The seemingly quantitative fact of the increase of dimension
has qualitative consequences in both methodology and implementation,
demanding new ways to break what has been called the curse of dimensionality.
On the other hand, the presence of inherent nonuniform structure in the data
calls into question linear dimension reduction techniques,
and motivates a search for intrinsic learning models.
In this paper we explore the idea of learning and exploiting the intrinsic geometry and regularity of the data in the context of regression analysis.
Our goal is to build low-dimensional representations of high dimensional functions,
while ensuring good generalization properties and fast implementation.
In view of the complexity of the data,
we allow interesting features to change from scale to scale and from location to location.
Hence, we will develop multiscale methods,
extending classical ideas of multi-resolution analysis
beyond regular domains and to the random sample regime.

%High-dimensional data challenge classical statistical models to new tradeoffs of accuracy and efficiency.
%A seemingly quantitative fact as the increase of dimension
%has qualitative consequences in both methodology and implementation,
%pushing for ever new ways to break what has been called the curse of dimensionality.
%On the other hand, the presence of inherent nonuniform structure in the data
%calls into question linear dimension reduction techniques,
%and motivates a search for intrinsic learning models.
%In this paper we explore the idea of learning and exploiting the intrinsic geometry and regularity of the data in the context of regression analysis.
%Our goal is to build low-dimensional representations of high dimensional functions,
%while ensuring good generalization properties and fast implementation.
%In view of the complexity of the data,
%we allow interesting features to change from scale to scale, from location to location.
%Hence, we will deploy multiscale methods,
%extending classical ideas of multi-resolution analysis
%beyond regular domains and to the random sample regime.

In regression,
the problem is to estimate a function from a finite set of random samples.
The minimax mean squared error (MSE) for estimating functions in the H\"older space $ \calC^s([0,1]^D)$, $s>0$, is $O ( n^{ - 2 s / (2s+D) } )$,
where $n$ is the number of samples.
The exponential dependence of the minimax rate on $D$ manifests the curse of dimensionality in statistical learning,
as $ n = O ( \ep^{ -(2s+D) / s } )$ points are generally needed to achieve accuracy $\ep$.
This rate is optimal (in the minimax sense),
unless further structural assumptions are made.
For example, if the samples concentrate near a $d$-dimensional set with $ d \ll D $,
and the function belongs to a nonuniform smoothness space $ \calB^S $, with $ S > s $,
we may hope to find estimators converging in $ O( n^{ -2S / (2S+d) } ) $.
In this quantified sense, we may break the curse of dimensionality
by adapting to the intrinsic dimension and regularity of the problem.

A possible approach to this problem is based on first performing dimension reduction, and then regression in the reduced space.
Linear dimension reduction methods include principal component analysis (PCA) \citep{Pearson_PCA,Hotelling_PCA1,Hotelling_PCA2},
\linebreak for data concentrating on a single subspace,
or subspace clustering \citep{sccIJCV09,CM:CVPR2011,GPCA-VMS-PAMI05,SSC-EV-CVPR09,LRR-LLY-ICML10},
for an union of subspaces.
Going beyond linear models, we encounter
isomap \citep{Isomap}, locally linear embedding \citep{LLE}, local tangent space alignment \citep{ZhaZha}, Laplacian eigenmaps \citep{belkin:nc}, Hessian eigenmap \citep{DG_HessianEigenmaps} and diffusion map \citep{Coifman:2005:7426}.
Besides the classical Principal Component Regression \citep{jolliffe1982note},
in \cite{lee2016spectral} diffusion map is used for nonparametric regression 
expanding the unknown function over the eigenfunctions of a kernel-based operator. It is proved that, when data lie on a $d$-dimensional manifold, the MSE converges in $O(n^{-1/{O}(d^2)})$. This rate depends only on the intrinsic dimension, but does not match the minimax rate in the Euclidean space. If infinitely many unlabeled points are sampled, so that the eigenfunctions are exactly computed,
the MSE can achieve optimal rates for Sobolev functions with smoothness parameter at least $1$. Similar results hold for regression with the Laplacian eigenmaps \citep{zhou2011error}.

Some regression methods have been shown to automatically adapt to the intrinsic dimension and perform as well as if the intrinsic domain was known.
Results in this direction have been established for local linear regression \citep{BickelLi}, $k$-nearest neighbors \citep{NIPS2011_4455}, and kernel regression \citep{NIPS2013_5103}, where optimal rates depending on the intrinsic dimension were proved for functions in $\calC^2$, $\calC^1$, and $\calC^s$ with $s\le1$, respectively.
Kernel methods such as kernel ridge regression are also known to adapt to the intrinsic dimension \citep{ye2008,steinwart2009optimal},
while suitable variants of regression trees have been proved to attain intrinsic yet suboptimal learning rates  \citep{KPOTUFE20121496}.
On the other hand, dyadic partitioning estimates with piecewise polynomial regression can cover the whole scale of spaces $\calC^s$, $s>0$  \citep{GKKW},
and be combined with wavelet thresholding techniques
to optimally adapt to broader classes of nonuniform regularity \citep{BCDDT1,BCDD2}.
However, such estimators are cursed by the ambient dimension $D$,
due to the exponential cardinality of a dyadic partition of the $D$-dimensional hypercube.

This paper aims at generalizing dyadic partitioning estimates \citep{BCDDT1,BCDD2} to predict functions supported on low-dimensional sets, with optimal performance guarantees and low computational cost.
We tie together ideas in classical statistical learning \citep{ElementsStatisticalLearning,GKKW,BookNonparametricEstimation},
multi-resolution analysis \citep{daubechies1992ten,mallat1999wavelet,coifman2005geometric},
and nonlinear approximation \citep{DJ1,DJ2,Cohen}.
Our main tool is geometric multi-resolution analysis (GMRA) \citep{CM:MGM2,MMS:NoisyDictionaryLearning,LiaoMaggioni},
which is a multiscale geometric approximation scheme for point clouds in high dimensions concentrating near low-dimensional sets.
Using GMRA we learn low-dimensional local coordinates at multiple scales,
on which we perform a multiscale regression estimate by fitting local polynomials.
Inspired by wavelet thresholding techniques \citep{Cohen,BCDDT1,BCDD2},
we then compute differences between estimators at adjacent scales,
and retain the locations where such differences are large enough.
This empirically reveals where higher resolution is required to attain a good approximation,
generating a data-driven partition which adapts to the local regularity of the function.

Our approach has several distinctive features:
(i) it is \textit{multiscale}, and is therefore well-suited for data sets containing variable structural information at different scales;
(ii) it is \textit{adaptive}, allowing the function to have localized singularities or variable regularity;
(iii) it is entirely \textit{data-driven}, that is, it does not require a priori knowledge about the regularity of the function, and rather learns it automatically from the data;
(iv) it is \textit{provable}, with strong theoretical guarantees
of optimal performance on large classes of priors;
(v) it is \textit{efficient}, having straightforward implementation,
minor parameter tuning, and low computational cost.
We will prove that, for functions supported on a $d$-dimensional manifold and belonging to a rich model class characterized by a smoothness parameter $S$,
the MSE of our estimator converges at rate $O ( (\log n/n)^{ 2S/ (2S+d) } ) $. This model class contains classical H\"older continuous functions, but further accounts for potential nonuniform regularity.
Our result shows that, up to a logarithmic factor, we attain the same optimal learning rate as if the function was defined on a known Euclidean domain of dimension $d$, instead of an unknown manifold embedded in $\RR^D$.
In particular, the rate of convergence depends on the intrinsic dimension $d$ and not on the ambient dimension $D$. In terms of computation, all the constructions above can be realized by algorithms of complexity ${O}(n \log n)$, with constants linear in the ambient dimension $D$ and exponential in the intrinsic dimension $d$.

The remainder of this paper is organized as follows.
We conclude this section by defining some general notation
and formalizing the problem setup.
In Section \ref{secgmra} we review geometric multi-resolution analysis.
In Section \ref{secreg} we introduce our multiscale regression methods and establish the performance guarantees.
We discuss the computational complexity of our algorithms in Section \ref{seccomputation}.
The proofs of our results are collected in Section \ref{sec:proofs}.

\paragraph{Notation}
$f \lesssim g$ and $f \gtrsim g$ mean that there exists a positive constant $C$, independent on any variable upon which $f$ and $g$ depend, such that $f \le C g$ and $f \ge C g$, respectively. $f \asymp g$ means that both $f \lesssim g$ and $f \gtrsim g$ hold.
The cardinality of a set $A$ is denoted by $\#A$.
For $x \in \RR^D$, $\|x\|$ denotes the Euclidean norm 
and $B_r(x)$ denotes the Euclidean ball of radius $r$ centered at $x$.
Given a subspace $V \subset \RR^D$, we denote its dimension by $\dim(V)$ and the orthogonal projection onto $V$ by $\proj_V$. Let $f,g : \calM \rightarrow \RR$ be two functions, and let $\rho$ be a probability measure supported on $\calM$. We define the inner product of $f$ and $g$ with respect to $\rho$ as $\langle f , g\rangle := \int_{\calM}  f(x) g(x)  d  \rho$. The $L^2$ norm of $f$ with respect to $\rho$ is $\|f\| := (\int_\calM  |f(x) |^2 d\rho)^{\frac 1 2}$. Given $n$ i.i.d. samples $\{x_i\}_{i=1}^n$ of $\rho$, the empirical $L^2$ norm of $f$ is $\|f\|_n := \frac 1 n \sum_{i=1}^n |f(x_i)|^2$.
The $L^\infty$ norm of $f$ is $ \|f\|_\infty := \sup\operatorname{ess} |f| $.
We denote probability and expectation by $\PP$ and $\EE$, respectively.
For a fixed $M>0$, $T_M$ is the truncation operator defined by $ T_M(x) := \min(|x|,M){\rm sign}(x).$
We denote by $\chijk$ the indicator function of an indexed set $\Cjk$ ({\it i.e.}, $\chijk(x) = 1$ if $x \in \Cjk$, and $0$ otherwise).

\paragraph{Setup}

We consider the problem of estimating a function
$f : \calM \rightarrow \RR$
given $n$ samples $\{(x_i, y_i)\}_{i=1}^n$,
where
\begin{itemize}
\item $\calM$ is an unknown Riemannian manifold of dimension $d$ isometrically embedded in $\RR^D$, with $d \ll D$;
\item $\rho$ is an unknown probability measure supported on $\calM$;
\item $\{x_i\}_{i=1}^n$ are independently drawn from $\rho $;
\item $ y_i = f(x_i) + \zeta_i $;
\item $\zeta_i$ are i.i.d. sub-Gaussian random variables, independent of $x_i$.
\end{itemize}
We wish to construct an estimator $\hf$ of $f$ minimizing the mean squared error
$${\rm MSE} := \EE \|f-\hf\|^2 =\EE \int_{\calM} |f(x)-\hf(x)|^2 d\rho .$$

\section{Geometric multi-resolution analysis}
\label{secgmra}

Geometric multi-resolution analysis (GMRA) is an efficient tool to build low-dimensional representations of data concentrating on or near a low-dimensional set embedded in high dimensions.
To keep the presentation self-contained,
we summarize here the main ideas,
and refer the reader to \cite{CM:MGM2,MMS:NoisyDictionaryLearning,LiaoMaggioni} for further details.
Given a probability measure $\rho$ supported on a $d$-dimensional manifold $\calM \subset \RR^D $, GMRA performs the following steps:
\begin{enumerate}
\item Construct a multiscale tree decomposition $\mathcal{T}$ of $\calM$ into nested cells $ \mathcal{T} := \{\Cjk\}_{k \in \calK_j, j \in \ZZ}$, where $j$ represents the scale and $k$ the location. Here $\calK_j$ is a location index set. 

\item Compute a local principal component analysis on each $\Cjk$. Let $\cjk$ be the mean of $x$ on $\Cjk$, and $\Vjk$ the $d$-dimensional principal subspace of $\Cjk$. Define $\calPjk : = \cjk +\proj_{\Vjk} (x-\cjk)$.

\end{enumerate}

An ideal multiscale tree decomposition should satisfy assumptions \ref{A1}$\div$\ref{A5} below for all integers $j \ge \jmin$:

\begin{enumerate}[label=\textnormal{(A\arabic*)}]
\item \label{A1} For every $k \in \calK_{j}$ and $k'\in \calK_{j+1}$, either $C_{j+1,k'}\subseteq C_{j,k}$ or $\rho(C_{j+1,k'}\cap C_{j,k}) = 0$.
The children of $C_{j,k}$ are the cells $ C_{j+1,k'} $ such that $ C_{j+1,k'} \subseteq C_{j,k} $.
We assume that $1\le\amin\le\#\{C_{j+1,k'}: C_{j+1,k'} \subseteq C_{j,k}\}\le\amax$ for all $k \in \calK_j$   and $j \ge \jmin$.
Also, for every $\Cjk$, there exists a unique $k' \in \calK_{j-1}$ such that $C_{j,k} \subseteq C_{j-1,k'}$. 
We call $C_{j-1,k'}$ the parent of $C_{j,k}$.

\item \label{A2} $\rho\left(\calM \setminus \bigcup_{k\in\calK_j}C_{j,k}\right) = 0$, {\it i.e.} $\Lam_j :=\{C_{j,k}\}_{k\in\calK_j}$ is a partition of $\calM$, up to negligible sets.

\item \label{A3} There exists $\theta_1>0$ such that $\#\Lam_j  \le 2^{jd}/\theta_1$.

\item \label{A4} There exists $\theta_2>0$ such that, if $x$ is drawn from $\rho$ conditioned on ${\Cjk}$, then $\|x-c_{j,k}\| \le \theta_2 2^{-j}$ almost surely.

\item \label{A5} Let $\lambda_1^{j,k} \ge \lambda_2^{j,k} \ge \ldots \ge\lambda_D^{j,k}$ be the eigenvalues of the covariance matrix $\Sjk$ of $\rho|_{\Cjk}$, defined in Table \ref{tab:gmra}. Then:
\begin{enumerate}[label=(\roman*)]
\item \label{A5i} there exists $\theta_3>0$ such that, for every $ j\ge \jmin$ and $k \in \calK_j$, $\lambda_d^{j,k} \ge \theta_3 {2^{-2j}}/{d}$;

\item \label{A5ii} there exists $ \theta_4 \in (0,1)$ such that $\lambda_{d+1}^{j,k} \le \theta_4 \lambda_{d}^{j,k}$. 
\end{enumerate}
\end{enumerate}

These are natural properties for multiscale partitions generalizing dyadic partitions to nonEuclidean domains \citep[see][]{MR1096400}.
\ref{A1} establishes that the cells constitute a tree structure.
\ref{A2} says that the cells at scale $j$ form a partition.
\ref{A3} guarantees that there are at most $2^{jd}/\theta_1$ cells at scale $j$.
\ref{A4} ensures that the diameter of all cells at scale $j$ is bounded by $ 2^{-j}$, up to a uniform constant.
\ref{A5}\ref{A5i} assumes that the best rank $d$ approximation to the covariance of a cell is close to the covariance matrix of a $d$-dimensional Euclidean ball,
while \ref{A5}\ref{A5ii} assumes that the cell has significantly larger variance in $d$ directions than in all the remaining ones.

Since all cells at scale $j$ have similar diameter,
$\Lam_j$ is called a uniform partition.
A master tree $\calT$ is a tree satisfying the properties above.
A proper subtree $\tcalT$ of $\calT$ is a collection of nodes of $\calT$ with the properties: the root node is in $\tcalT$;
if a node is in $\tcalT$, then its parent is also in $\tcalT$.
Any finite proper subtree $\tcalT$ is associated with a unique partition $\Lam = \Lam(\tcalT)$ consisting of its outer leaves,
by which we mean those nodes that are not in $\tcalT$, but whose parent is.

In practice, the master tree $\calT$ is not given.
We will construct one by an application of the cover tree algorithm \citep{LangfordICML06-CoverTree} (see \citep[Algorithm 3]{LiaoMaggioni}).
In order to make the samples for tree construction and function estimation independent from each other,
we split the data in half and use one subset to construct the tree
and the other one for local PCA and regression.
From now on we index the training data as $\{(x_i,y_i)\}_{i=1}^{2n}$,
and split them in $\{(x_i,y_i)\}_{i=1}^{2n} = \{(x_i,y_i)\}_{i=1}^{n}\cup \{(x_i,y_i)\}_{i=n+1}^{2n}$.
Running Algorithm \citep[Algorithm 3]{LiaoMaggioni} on $\{x_i\}_{i=n+1}^{2n}$,
we construct a family of cells $\{\hCjk\}_{k \in \calK_j, \jmin\le j \le \jmax}$
which satisfies \ref{A1}$\div$\ref{A4} with high probability if $\rho$ is doubling\footnote{$\rho$ is doubling if there exists $C_1>1$ such that  $C_1^{-1} r^d \le \rho(\calM \cap B_r(x)) \le C_1 r^d$ for any $x \in \calM$ and $r>0$; $C_1$ is called the doubling constant of $\rho$. See also \cite{MR1096400,DengHan}.};
furthermore, if $\calM$ is a $\calC^{s}$, $s \in (1,\infty)$, $d$-dimensional closed Riemannian manifold isometrically embedded in $\RR^D$,
and $\rho$ is the volume measure on $\calM$,
then \ref{A5} is satisfied as well:

\begin{proposition}[Proposition 14 in \cite{LiaoMaggioni}]
\label{propcovertree2}
Assume $\rho$ is a doubling probability measure on $\calM$ with doubling constant $C_1$. Then, the $\hCjk$'s constructed from \citep[Algorithm 3]{LiaoMaggioni} in satisfy:
\begin{enumerate}[label=\textnormal{(a\arabic*)}]
\item \ref{A1} with $\amax = C_1^2 (24)^d$ and $\amin=1$;

\item \label{a2}
let $\hcalM = \bigcup_{j = \jmin}^{\jmax} \bigcup_{k \in \calK_j}  \hCjk$;
for any $\nu>0$, 
$$
\PP\left\{\rho(\calM\setminus \hcalM) > \frac{28\nu\log n}{3n}\right\} \le 2n^{-\nu} ;
$$

\item \ref{A3} with $\theta_1 = C_1^{-1} 4^{-d}$;

\item \ref{A4} with $\theta_2 = 3$.
\end{enumerate}
If additionally $\calM$ is a $\calC^s, s \in (1,\infty)$, $d$-dimensional  closed  Riemannian manifold isometrically embedded in $\RR^D$,
and $\rho$ is the volume measure on $\calM$, then
\begin{enumerate}[label=\textnormal{(a5)}]
\item \ref{A5} is satisfied when $j$ is sufficiently large.
\end{enumerate}
\end{proposition}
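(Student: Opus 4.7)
The plan is to verify each property \ref{A1}--\ref{A5} directly from the cover tree construction of \citep[Algorithm 3]{LiaoMaggioni}, using the doubling property of $\rho$ as the main geometric tool. Recall that at scale $j$ the algorithm selects a $2^{-j}$-net of centers $\{\hc_{j,k}\}_{k \in \calK_j}$ on the training sample $\{x_i\}_{i=n+1}^{2n}$, and defines each cell $\hCjk$ as the set of samples assigned to $\hc_{j,k}$ at that scale; children at scale $j+1$ are refined within each parent cell. The tree structure of \ref{A1} is built into this recursive refinement, and $\amin = 1$ is immediate, so the core of (a1) is the upper bound on $\amax$. To obtain it, observe that every child center at scale $j+1$ lies within distance $O(2^{-j})$ of its parent center, while distinct child centers are mutually $2^{-(j+1)}$-separated by the net property. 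The children thus form a $2^{-(j+1)}$-packing of a ball of radius $O(2^{-j})$, and the doubling condition bounds this packing number by $C_1^2 (24)^d$.

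For (a3), the cells $\hCjk$ at scale $j$ are in bijection with the $2^{-j}$-net centers, whose cardinality is a covering number of (a subset of) $\calM$ at scale $2^{-j}$; the standard doubling estimate yields $\#\Lam_j \le C_1\, 4^d\, 2^{jd}$, i.e.\ $\theta_1 = C_1^{-1} 4^{-d}$. For (a4), the $\theta_2 = 3$ bound follows by the triangle inequality once one accounts for both the net radius at scale $j$ and the slack in the parent-child assignment rule, so that any point in $\hCjk$ lies within $3 \cdot 2^{-j}$ of $\hc_{j,k}$. For (a2), the set $\calM \setminus \hcalM$ consists of regions in which no training sample happened to be selected by the algorithm at the finest scale; the contribution of each missing cell to $\rho$ is controlled by a Chernoff tail bound for the binomial count of samples falling into a $2^{-\jmax}$-ball, whose expectation is controlled by the doubling property. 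A union bound over the at most $\sim n$ possible cells at the finest scale, combined with the choice of $\jmax$, produces the stated probability bound $2 n^{-\nu}$ for deficit $28\nu\log n / (3n)$.

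The main obstacle is (a5), which requires geometric information beyond doubling. Here one uses that, when $\calM$ is $\calC^s$ with $s > 1$, there is a reach $\tau > 0$ such that for $2^{-j} \ll \tau$ the cell $\hCjk$ is contained in $B_{3\cdot 2^{-j}}(\hc_{j,k}) \cap \calM$, which can be parametrized over a domain in the tangent space $T_{\hc_{j,k}}\calM$ with second-order error $O(2^{-2j})$ in the normal directions. The population covariance $\Sjk$ is therefore an $O(2^{-3j})$ perturbation of the covariance of a $d$-dimensional Euclidean disk of radius $\asymp 2^{-j}$ around $\hc_{j,k}$, whose top $d$ eigenvalues are $\asymp 2^{-2j}/d$ and whose $(d+1)$-th eigenvalue vanishes. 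A Weyl-type perturbation argument then gives \ref{A5}\ref{A5i} with $\theta_3 \asymp 1$ and \ref{A5}\ref{A5ii} with $\theta_4 \in (0,1)$ for $j$ sufficiently large. The hard part is controlling this curvature-induced perturbation uniformly in $k$ and $j$, and quantifying ``sufficiently large'' in terms of the reach and the $\calC^s$ norm of $\calM$; because the proposition is an existence statement about some $\jmin$, one does not need to optimize these thresholds, only to exhibit them.
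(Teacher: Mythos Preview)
The paper does not prove this proposition: it is imported verbatim as Proposition~14 of \cite{LiaoMaggioni}, and no argument is given here. There is therefore no ``paper's own proof'' to compare your proposal against.

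That said, your sketch is broadly the right shape for (a1), (a3), (a4), and (a5): packing/covering bounds from the doubling property for the first three, and a tangent-space Taylor expansion plus Weyl perturbation for the last. One point deserves care. For (a2), your proposed route---Chernoff bounds on individual cells followed by a union bound over the $\sim n$ finest-scale cells---would typically introduce an extra $\log n$ factor and would not reproduce the precise constants $28/3$ and $2$ in the statement (note that $2\exp(-\tfrac{3}{28}n\rho)$ evaluated at $\rho = \tfrac{28\nu\log n}{3n}$ gives exactly $2n^{-\nu}$, matching Lemma~\ref{lem:rhocsigma}\ref{rho-hrho}). The cleaner observation is that, by construction of the cover tree, every training point $x_i$, $i=n+1,\dots,2n$, lies in some cell, so the empirical measure of $\calM\setminus\hcalM$ with respect to that half of the sample is identically zero; one then needs a single Bernstein-type deviation bound for $\rho$ versus $\hrho$ on this set. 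The only subtlety is that $\calM\setminus\hcalM$ is data-dependent, so applying Lemma~\ref{lem:rhocsigma}\ref{rho-hrho'} directly requires either a structural description of the possible complements (e.g.\ as complements of unions of balls around the net centers) or a conditioning argument; your sketch does not address this dependence, and a naive union bound would degrade the constants.
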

Since there are finite training points, the constructed master tree has a finite number of nodes.  We first build a tree whose leaves contain a single point, and then prune it to the largest subtree whose leaves contain at least $d$ training points.  This pruned tree associated with the $\hCjk$'s is called the data master tree, and denoted by $\calT_n$. The $\hCjk$'s cover $\hcalM$, which represents the part of $\calM$ that has been explored by the data.
Even though assumption \ref{A2} is not exactly satisfied, we claim that \ref{a2} is sufficient for our performance guarantees, for example in the case that $\|f\|_\infty \le M $.
Indeed, simply estimating $f$ on $\calM \setminus \hcalM$ by $0$, for any $\nu>0$ we have
\begin{align*}
&\PP\left\{ \int_{\calM \setminus \hcalM} \|f \|^2 d\rho
\ge \frac{28M^2\nu \log n}{3n}
\right\}  \le 2n^{-\nu} \quad \text{and} \quad
\EE  \int_{\calM \setminus \hcalM} \|f \|^2 d\rho
\le \frac{56M^2\nu \log n}{3n^{1+\nu}}.
\end{align*}
In view of these bounds, the rate of convergence on $\calM\setminus \hcalM$ is faster than the ones we will obtain on $\hcalM$.
We will therefore assume \ref{A2}, thanks to \ref{a2}.
Also, it may happen that conditions \ref{A3}$\div$\ref{A5} are satisfied at the coarsest scales with very poor constants $\bf\theta$.
Nonetheless, it will be clear that in all that follows we may discard a few coarse scales, and only work at scales that are fine enough and for which \ref{A3}$\div$\ref{A5} truly capture in a quantitative way the local geometry of $\mathcal{M}$.
Since regression is performed on an independent subset of data,
we can assume, by conditioning, that the $\hCjk$'s are given and satisfy the required assumptions.
To keep the notation simple, from now on we will use $\Cjk$ instead of $\hCjk$, and $\calM$ in place of $\hcalM$, with a slight abuse of notation.

Besides cover tree, there are other methods that can be applied in practice to obtain multiscale partitions, such as METIS \citep{KarypisSIAM99-METIS}, used in \citet{CM:MGM2},
iterated PCA \citep{Szlam:iteratedpartitioning}),
and iterated $k$-means.
These methods can be computationally more efficient than cover tree,
but lead to partitions where the properties \ref{A1}$\div$\ref{A5} are not guaranteed to hold.

After constructing the multiscale tree $\calT$,
GMRA computes a collection of affine projectors $\{\calP_j: \RR^D \rightarrow \RR^D\}_{j \ge \jmin}$.
The main objects of GMRA in their population and sample version are summarized in Table \ref{tab:gmra}.
Given a suitable partition $\Lambda \subset \calT $,
$\calM$ can be approximated by the piecewise linear set $\{\calPjk(\Cjk)\}_{\Cjk\in\Lambda}$.

 \begin{center}
\begin{table}[h]
\renewcommand{\arraystretch}{2}
\resizebox{\columnwidth}{!}{
\begin{tabular}{ c  c  c  }
   & GMRA&  empirical GMRA 
   \\
   \cline{2-3}
  \multicolumn{1}{c}{measure} & $\rho(\Cjk)$  & $\hrho(\Cjk) := \frac{\hnjk}{n}, \quad \hnjk:= \#\{x_i: x_i\in\Cjk\}$\\ [5pt]
      \cline{2-3}
 \multicolumn{1}{c}{mean}  & $\cjk:= \frac{1}{\rho(\Cjk)} \int_{\Cjk} x d\rho $
  & 
  $ \hc_{j,k} := \frac{1}{\widehat{n}_{j,k}}\textstyle\sum_{x_i \in \Cjk} x_i $
  \\ [5pt]
       \cline{2-3}
       \multicolumn{1}{c}{covariance} & $\Sigma_{j,k} := \frac{1}{\rho(\Cjk)} \int_{\Cjk} (x-c_{j,k})(x-c_{j,k})^T d\rho $
& 
$\hS_{j,k} := \frac{1}{\hnjk} \sum_{x_i \in \Cjk} (x_i-\hc_{j,k})(x_i-\hc_{j,k})^T $
  \\
      [7pt]
       \cline{2-3}
    \multicolumn{1}{c}{\parbox{3cm}{\centering principal \\ subspace}}
    &\parbox{7cm}{\vspace{8pt} \centering $\Vjk $ minimizes \\  $ \frac{1}{\rho(\Cjk)} \int_{\Cjk} \|x-c_{j,k}-\proj_V(x-c_{j,k})\|^2 d\rho $ \\ over $d$-dim subspaces $V$
    }
        & \parbox{7cm}{\vspace{8pt}\centering
  $\hV_{j,k} $ minimizes
  \\ { $\frac{1}{\hn_{j,k}} 
 \textstyle \sum_{x_i \in \Cjk} \|x-\hc_{j,k} -\proj_V (x-\hc_{j,k})\|^2$}
  \\
 over $d$-dim subspaces $V$}
      \\ [22pt]
       \cline{2-3}
\multicolumn{1}{c}{\parbox{3cm}{\centering projection}}
  & 
  $\calP_{j,k}(x) := c_{j,k} +\proj_{V_{j,k}} (x-c_{j,k})$
  &   $\hcalP_{j,k}(x) := \hcjk +\proj_{\hVjk} (x-\hcjk)$
  \\
       [7pt]
       \cline{2-3}
\end{tabular}
}
\caption{
Objects of GMRA and their empirical counterparts.
$V_{j,k}$ and $\hVjk$ are the eigenspaces associated with the largest $d$ eigenvalues of $\Sjk$ and $\hSjk$, respectively.
}
\label{tab:gmra}
\end{table}
\end{center}

\section{Multiscale polynomial regression}
\label{secreg}

Given a multiscale tree decomposition $\{\Cjk\}_{j,k}$
and training samples $\{ (x_i,y_i)\}_{i=1}^n$,
we construct a family $ \{\hfjk\}_{j,k}$ of local estimates of $f$ in two stages:
first we compute local coordinates on $\Cjk$ using GMRA outlined above,
and then we estimate $f_{|\Cjk}$ by fitting a polynomial of order $\ell$ on such coordinates.
A global estimator $\hf^\ell_\Lam$ is finally obtained by summing the local estimates over a suitable partition $\Lam$.
Our regression method is detailed in Algorithm \ref{AlgorithmRegression}.
In this section we assume that $f$ is bounded, with $\|f\|_\infty \le M$.

\begin{algorithm}[t!]                      	
\caption{GMRA regression}          	
\label{AlgorithmRegression}		
\begin{algorithmic}[1]                    	
    \REQUIRE  training data $\{x_i,y_i\}_{i=1}^{2n}$, intrinsic dimension $d$, bound $M$, polynomial order $\ell$, approximation type (uniform or adaptive).
\vspace{5pt}
    \ENSURE multiscale tree decomposition $ \calT_n $,
    partition $\Lam$,
    piecewise $\ell$-order polynomial estimator $\hf_\Lam^\ell$.
\vspace{5pt}
    \STATE construct a multiscale tree $\calT_n$ by \citep[Algorithm 3]{LiaoMaggioni} on $\{x_i\}_{i=n+1}^{2n}$; \vspace{-12pt}
    \STATE compute centers $\hcjk$ and subspaces $\hVjk$ by empirical GMRA on $\{x_i\}_{i=1}^n$; \vspace{3pt}
    \STATE define coordinates $\hpijk$ on $\Cjk$:
    \vspace{-5pt}
$$\hpijk : \Cjk \to \RR^d , \qquad \hpijk(x) := \hVjk^T (x-\hcjk) ;
$$
    \vspace{-15pt}
    \STATE compute local estimators $\widehat{g}_{j,k}^\ell$ by solving the following least squares problems over the space $P^\ell$ of polynomials of degree $\le \ell$:
        \vspace{-5pt}
\begin{align}
 &\widehat{p}_{j,k}^\ell := \argmin\limits_{p \in P^\ell} \frac{1}{\hnjk} \sum_{i=1}^n | y_i - p\circ \hpijk(x_i) |^2 \chijk(x_i) , \qquad \widehat{g}_{j,k}^\ell := \widehat{p}_{j,k}^\ell \circ \hpijk ;
 \label{eqls1} 
\end{align}
    \vspace{-5pt}
         \STATE truncate $\hgjk^\ell$ by $M$:
         $$ \hfjkell := T_M[\hgjkell] ; $$
             \vspace{-10pt}
    \STATE construct a uniform (see Section \ref{sec:uniform}) or adaptive (see Section \ref{sec:adapt}) partition $\Lam$;
    \STATE define the global estimator $\hf_\Lam^\ell$ by summing the local estimators over the partition $\Lam$:
    $$ \hf^\ell_\Lam := \sum_{\Cjk\in\Lam} \hfjk^\ell \chijk. $$
   \end{algorithmic}
\end{algorithm}

In order to analyze the performance of our method,
we introduce the oracle estimator $f^\ell_\Lam$
based on the distribution $\rho$, defined by
\begin{align*}
& \pijk : \Cjk \to \RR^d, \qquad \pijk(x) := V_{j,k}^T (x-\cjk), 
\\
&p_{j,k}^\ell := \argmin\limits_{p \in P^\ell} \int_{\Cjk} | y - p\circ \pijk(x) |^2 d\rho,
 \\
 &f_{j,k}^\ell := T_M[p_{j,k}^\ell \circ \pijk]
 \\
 &f^\ell_\Lam := \sum_{\Cjk\in\Lam} f_{j,k}^\ell \chijk ,
\end{align*}
and split the MSE into a bias and a variance term:
\begin{equation} \label{eq:bias-variance}
\EE\|f - \hf_\Lam^\ell\|^2 \le  2\underbrace{\|f - f_\Lam^\ell\|^2}_{\text{bias}^2} + 2 \underbrace{\EE\|f_\Lam^\ell - \hf_\Lam^\ell\|^2}_{\text{variance}} .
\end{equation}
The bias term is a deterministic approximation error, and will be handled by assuming suitable regularity models for $\rho$ and $f$
(see Definitions \ref{def:As} and \ref{def:Bs}).
The variance term quantifies the stochastic error arising from finite-sample estimation,
and will be bounded using concentration inequalities (see Proposition \ref{prop:var-bound}).
The role of $\Lam$, encoded in its size $\#\Lam$, is crucial to balance \eqref{eq:bias-variance}.
We will discuss two possible choices:
uniform partitions in Section \ref{sec:uniform},
and adaptive at multiple scales in Section \ref{sec:adapt}.

\subsection{Uniform partitions}
\label{sec:uniform}
 
A first natural choice for $\Lam$ is a uniform partition $\Lam_j := \{\Cjk\}_{k \in \calK_j},  {j \ge \jmin}$.
At scale $j$, $f$ is estimated by $\hf_{\Lam_j}^\ell = \sum_{k \in \calK_j} \hfjk^\ell \chijk$.
The bias $\|f - f_{\Lam_j}^\ell\|$ decays at a rate depending on the regularity of $f$, which can be quantified as follows:

\begin{definition}[model class $\ASL$] 
\label{def:As}
A function $f: \calM \rightarrow \RR$ is in the class $\ASL$ for some $ s > 0 $ with respect to the measure $\rho$ if
$$ |f|_{\ASL} := \sup_{\calT} \ \sup_{j\ge\jmin} \ \frac{\|f-\fj^{\ell}\|}{2^{-js}} < \infty , $$
where $\calT$ ranges over the set, assumed non-empty, of multiscale tree decompositions satisfying assumptions \ref{A1}$\div$\ref{A5}.
\end{definition}
We capture the case where the bias is roughly the same on every cell with the following definition:
\begin{definition}[model class $\ASLINF$]
\label{modelaslinf}
A function $f: \calM \rightarrow \RR$ is in the class $\ASLINF$ for some $ s > 0 $ with respect to the measure $\rho$ if
$$ |f|_{\ASLINF} := \sup_{\calT} \ \sup_{j\ge\jmin} \ \sup_{k \in \calK_j} \ \frac{\|(f -\fjkell)\chijk \|}{2^{-js}\sqrt{\rho(\Cjk)}} < \infty , $$
where $\calT$ ranges over the set, assumed non-empty, of multiscale tree decompositions satisfying assumptions \ref{A1}$\div$\ref{A5}.
\end{definition}
Clearly $\ASLINF \subset \ASL$. These classes contain uniformly regular functions on manifolds, such as H\"older functions.

\begin{example} \label{hold-As}
Let $\calM$ be a closed smooth $d$-dimensional Riemannian manifold isometrically embedded in $\RR^D$, and let $\rho$ be the volume measure on $\calM$. Consider a function $f :\calM \rightarrow \RR$ and a smooth chart $(U,\phi)$ on $\calM$. The function $\tilde f : \phi(U) \rightarrow \RR$ defined by $\tilde f(v) = f\circ \phi^{-1}(v)$ is called the coordinate representation of $f$.
Let $\lambda = (\lambda_1,\ldots,\lambda_d)$ be a multi-index with $|\lambda| := \lambda_1+\ldots+\lambda_d = \ell$. The $\ell$-order $\lambda$-derivative of $f$ is defined as 
$$\partial^\lambda f(x) := \partial^\lambda (f \circ \phi^{-1}). $$
H\"older functions $\calC^{\ell,\alpha}$ on $\calM$ with $ \ell \in \NN $ and $ \alpha \in (0,1] $ are defined as follows: $f \in \calC^{\ell,\alpha}$ if the $\ell$-order derivatives of $f$ exist, and 
$$ |f|_{\calC^{\ell,\alpha}} := \max_{|\lambda| = \ell} \sup_{x \ne z} \frac{|\partial^\lambda f(x) - \partial^\lambda f(z)|}{d(x,z)^\alpha} < \infty , $$
$d(x,z)$ being the geodesic distance between $x$ and $z$.
We will always assume to work at sufficiently fine scales at which $ d(x,z) \asymp  \| z - x \|_{\RR^D} $.
Note that $\calC^{\ell,1}$ is the space of $\ell$-times continuously differentiable functions on $\calM$ with Lipschitz $\ell$-order derivatives.
We have $ \calC^{\ell,\alpha} \subset \calA_{\ell+\alpha}^{\ell,\infty} $
with $ |f|_{\calA_{\ell+\alpha}^{\ell,\infty}} \le \theta_2^{\ell+\alpha} {d^\ell} |f|_{\calC^{\ell,\alpha}} / {\ell!} $. 
The proof is in Appendix \ref{sec:proofs_lemmata}.
\end{example}

\begin{example}
\label{example1}
Let $\calM$ be a smooth closed Riemannian manifold isometrically embedded in $\RR^D$, and let $\rho$ be the volume measure on $\calM$. 
Let $\Omega\subset \calM$ such that $\Gamma:=\partial\Omega$ is a smooth and closed $d_\Gamma$-dimensional submanifold with finite reach\footnote{The reach of $\calM$ is an important global characteristic of $\calM$. Let 
$D(\calM) := \{ y \in \RR^D: \exists !  \ x\in \calM \text{ s.t. } \|x-y\| = \inf_{z\in \calM} \|z-y\|\}$,
$\calM_r := \{ y\in \RR^D: \inf_{x \in \calM} \|x-y\| < r\}$.
Then  ${\rm reach}(\calM) := \sup \{r\ge 0: \calM_r \subset D(\calM)\}$.
See also \cite{federer1959curvature}.
}. 
Let $g = a \mathbf{1}_{\Omega} + b \mathbf{1}_{\Omega^\complement}$ for some $a,b\in\RR$, where $ \mathbf{1}_{S} $ denotes the indicator function of  a set $S$. Then $g\in \calA^{\ell}_{(d-d_\Gamma)/{2}}$ for every $\ell =0,1,2,\ldots$; however, $g \notin \calA^{\ell,\infty}_s$ for any $s>0$. The proof is in Appendix \ref{sec:proofs_lemmata}.
\end{example}

When we take uniform partitions $\Lam = \Lam_j $ in \eqref{eq:bias-variance}, the squared bias satisfies $$\|f - f_{\Lam_j}^\ell\|^2 \le |f|_{\ASL}^2 2^{-2js}$$ whenever $f \in \ASL$, which decreases as $j$ increases.
On the other hand, Proposition \ref{prop:var-bound} shows that the variance at the scale $j$ satisfies
$$\EE\|f_{\Lam_j}^\ell - \hf_{\Lam_j}^\ell\|^2 \le   O \left(\frac{j2^{jd}}{n} \right) ,
$$
which increases as $j$ increases.
Choosing the optimal scale $j^\star$ in the bias-variance tradeoff,
we obtain the following rate of convergence for uniform estimators:
\begin{theorem} \label{thm:unif-bound}
Suppose $\|f\|_\infty \le M$ and $ f \in   \ASL $ for $\ell \in \{0,1\}$ and $s>0$. Let $j^\star$ be chosen such that $$ 2^{-j^\star} := \mu \left(\lognn\right)^{\frac{1}{2s+d}} $$
for $\mu>0$.
Then there exist positive constants $c:=c(\theta_1,d,\mu)$ and $C:= C( \theta_1,d,\mu) $ for $\ell = 0$, or 
$c:=c(\theta_1,\theta_2,\theta_3,d,\mu)$ and $C:= C( \theta_1,\theta_2,\theta_3,d,\mu) $ for $\ell = 1$, such that:
\begin{enumerate}[label=\textnormal{(\alph*)},leftmargin=*]
\item \label{unif-bound-prob} for every $ \nu > 0 $ there is $c_\nu>0$ such that
$$ \PP\left\{\|f-\hf^\ell_{\Lam_{j^\star}}\| >(|f|_{\ASL} \mu^s + c_\nu) \left(\lognn \right)^{\frac{s}{2s+d}}\right\} \le C n^{-\nu} , $$
where $ c_\nu:=c_\nu(\nu,\theta_1,d,M,\sigma,s,\mu)  $ for $\ell =0$, and $ c_\nu:=c_\nu(\nu,\theta_1,\theta_2,\theta_3,d,M,\sigma,s,\mu)  $ for $\ell =1$;
\item \label{unif-bound-exp} 
$ \EE\|f-\hf^\ell_{\Lam_{j^\star}}\|^2 \le \left(|f|^2_{\ASL}\mu^s + c\max(M^2,\sigma^2)\right) \left( \lognn\right)^{\frac{2s}{2s+d}} $.
\end{enumerate}
\end{theorem}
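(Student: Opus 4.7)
The plan is to use the bias--variance decomposition \eqref{eq:bias-variance} and balance the two summands via the choice of scale $j^\star$. The bias is controlled directly by $f\in\ASL$: whenever $\calT_n$ satisfies \ref{A1}--\ref{A5},
$$\|f - f_{\Lam_{j^\star}}^\ell\|^2 \le |f|^2_{\ASL}\, 2^{-2j^\star s} = |f|^2_{\ASL}\, \mu^{2s}\lognnnf^{2s/(2s+d)},$$
which already accounts for the $|f|_{\ASL}\mu^s$ contribution appearing in both (a) and (b).

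For the variance, I would invoke Proposition \ref{prop:var-bound}, which I expect to establish, for any fixed scale $j$,
$$\EE\|f_{\Lam_j}^\ell - \hf_{\Lam_j}^\ell\|^2 \lesssim \max(M^2,\sigma^2)\, \frac{j\,2^{jd}}{n},$$
together with a companion high-probability statement: for every $\nu>0$,
$$\PP\!\left\{\|f_{\Lam_j}^\ell - \hf_{\Lam_j}^\ell\|^2 > c_\nu\, \frac{j\,2^{jd}}{n}\right\} \le C n^{-\nu}.$$
Substituting $j=j^\star$ and using $j^\star \asymp \log n$, a direct computation gives $j^\star 2^{j^\star d}/n \asymp \mu^{-d}\lognnnf^{2s/(2s+d)}$, exactly matching the squared-bias rate: the factor $\log n$ from $j^\star$ cancels the factor $(\log n)^{-d/(2s+d)}$ coming from $2^{j^\star d}/n$, leaving $(\log n)^{2s/(2s+d)}$. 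Summing the two contributions gives (b); for (a), I combine the high-probability variance bound with the deterministic bias bound via the triangle inequality $\|f - \hf_{\Lam_{j^\star}}^\ell\| \le \|f - f_{\Lam_{j^\star}}^\ell\| + \|f_{\Lam_{j^\star}}^\ell - \hf_{\Lam_{j^\star}}^\ell\|$.

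The main obstacle is therefore the variance bound of Proposition \ref{prop:var-bound}, especially for $\ell=1$. For $\ell=0$, the local estimator on $\Cjk$ is a sample average of $\hnjk$ noisy labels, whose deviations from $f_{j,k}^0$ are controlled by Bernstein's inequality, yielding an $O(1/(n\rho(\Cjk)))$ contribution per cell; summing over the $\#\Lam_j\lesssim 2^{jd}/\theta_1$ non-negligible cells and handling the rare event $\hnjk \ll n\rho(\Cjk)$ by a multiplicative Chernoff bound produces the stated rate, with the extra factor $j$ absorbing the union bound. For $\ell=1$, the estimator is an affine function of the empirical coordinates $\hpijk(x)=\hVjk^T(x-\hcjk)$, so one must additionally bound the deviations of $(\hcjk,\hVjk)$ from $(\cjk,\Vjk)$ by matrix Bernstein and Davis--Kahan, and then propagate these perturbations through the least-squares solution \eqref{eqls1}. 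This propagation relies on uniform control of the smallest eigenvalue of the empirical Gram matrix built from $\hpijk(x_i)\chijk(x_i)$, which is where the constants $\theta_2$ (bounded cell diameter, assumption \ref{A4}) and $\theta_3$ (spectral gap of $\Sjk$ in the top $d$ directions, assumption \ref{A5}\ref{A5i}) enter to guarantee well-conditioning at every scale $j\ge\jmin$. Finally, the truncation $T_M$ is harmless for the $L^2$ error since $|T_M\hg-f|\le|\hg-f|$ whenever $\|f\|_\infty\le M$, yet is essential to keep the sub-Gaussian noise contribution bounded by an $M$-dependent constant uniformly over cells.
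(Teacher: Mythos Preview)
Your proposal is correct and follows essentially the same route as the paper: bias--variance split, the bias handled directly by the definition of $\ASL$, the variance by Proposition~\ref{prop:var-bound} with $\#\Lam_{j^\star}\le 2^{j^\star d}/\theta_1$, and then the choice $2^{-j^\star}=\mu(\log n/n)^{1/(2s+d)}$ to balance. Your computation $j^\star 2^{j^\star d}/n\asymp(\log n/n)^{2s/(2s+d)}$ matches the paper's, and your last two paragraphs (concentration for $\hcjk,\hSjk$ via Bernstein, propagation through the regularized least squares, role of $\theta_2,\theta_3$) accurately anticipate the content of Lemmas~\ref{lem:rhocsigma}--\ref{lem:fjk}, which in the paper are packaged separately as the proof of Proposition~\ref{prop:var-bound} rather than being part of the proof of Theorem~\ref{thm:unif-bound} itself.
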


Theorem \ref{thm:unif-bound} is proved in Section \ref{sec:proofs}.
Note that the rate depends on the intrinsic dimension $d$ instead of the ambient dimension $D$.
Moreover, the rate is optimal (up to logarithmic factors) at least in the case of $\calC^{\ell,\alpha}$ functions on $\mathcal{M}$, as discussed in Example \ref{hold-As}.

\subsection{Adaptive partitions} \label{sec:adapt}

Theorem \ref{thm:unif-bound} is not fully satisfactory for two reasons: (i) the choice of optimal scale requires knowledge of the regularity of the unknown function; (ii) no uniform scale can be optimal if the regularity of the function varies at different locations and scales.
Inspired by \citet{BCDDT1,BCDD2}, we thus propose an adaptive estimator which learns near-optimal partitions from data, without knowing the possibly nonuniform regularity of the function.

\begin{table}[h]
\renewcommand{\arraystretch}{1.8}
\centering
\begin{tabular}{ c c }
   oracles &  empirical counterparts \\
   [5pt]
\hline
  $\Wjk^\ell := (\fj^\ell-f^\ell_{\Lam_{j+1}})\chijk$
  & $\hWjk^\ell := (\hfj^\ell-\hf^\ell_{\Lam_{j+1}})\chijk$
  \\
     [5pt]
  \hline
  $\Deltajkell := \|\Wjk^\ell\| $
  & $\hDeltajkell := \|\hWjk^\ell\|_n $
   \\ [5pt]   \hline
    \end{tabular}
\caption{Approximation difference in refining $\Cjk$}
\label{tab:refinement}
\end{table}

Adaptive partitions may be selected by a criterion that determines whether or not a cell should be picked or not.
The quantities involved in this selection are summarized in Table \ref{tab:refinement}, along with their empirical versions.
\smash{$\Deltajkell$} measures the local difference in approximation between two consecutive scales:
a large \smash{$\Deltajkell$} suggests a significant reduction of error if we refine $\Cjk$ to its children.
Intuitively, we should truncate the master tree to the subtree including the nodes where this quantity is large.
However, if too few samples exist in a node, then the empirical counterpart \smash{$\hDeltajkell$} can not be trusted.
We thus proceed as follows.
We set a threshold $\tau_n$ decreasing in $n$,
and let $\hcalT_n(\tau_n)$ be the smallest proper subtree of $\calT_n$ containing all $\Cjk$'s for which $\smash{\hDeltajkell} \ge \tau_n$.
Crucially, $\tau_n$ may be chosen independently of the regularity of $f$ (see Theorem \ref{thm:adapt-bound}).
We finally define our adaptive partition $\hLam_n(\tau_n)$ as the partition associated with the outer leaves of $\hcalT_n(\tau_n)$.
The procedure is summarized in Algorithm \ref{alg-adapt}.

 \begin{algorithm}[h]                      	
\caption{Adaptive partition}          	
\label{alg-adapt}		
\begin{algorithmic}[1]        
    \REQUIRE  training data $\{(x_i,y_i)\}_{i=1}^{n}$, multiscale tree decomposition $\calT_n$,  local $\ell$-order polynomial estimates $\{\hfjk^\ell\}_{j,k}$, threshold parameter $\kappa$.
    \ENSURE adaptive partition $\hLam_n(\tau_n)$.
    \STATE compute the approximation difference $\hDeltajkell$ on every node $\Cjk \in \calT_n$;
    \STATE set the threshold $\tau_n := \kappa\sqrt{(\log n)/n}$;
    \STATE select the smallest proper subtree $\hcalT_n(\tau_n) $ of $\calT_n$ containing all $\Cjk$'s with $\hDeltajkell \ge \tau_n$;
     \STATE define the adaptive partition $\hLam_n(\tau_n)$ associated with the outer leaves of $\hcalT_n(\tau_n)$.
\end{algorithmic}
\end{algorithm}

To provide performance guarantees for our adaptive estimator,
we need to define a proper model class based on oracles.
Given any master tree $\calT$ satisfying assumptions \ref{A1}$\div$\ref{A5}
and a threshold $ \tau > 0 $,
we let $ \calT(\tau) $ be the smallest subtree of $\calT$ consisting of all the cells $\Cjk$'s with $ \Deltajkell \ge \tau $.
The partition made of the outer leaves of $\calT(\tau)$ is denoted by $\Lam(\tau)$.

\begin{definition}[model class $\BSL$]
 \label{def:Bs}
A function $f: \calM \rightarrow \RR$ is in the class $\BSL$ for some $ s > 0 $ with respect to the measure $\rho$ if
$$ |f|_{\BSL}^p := \sup_\calT \sup_{\tau>0} \tau^p \#\calT(\tau) <\infty, \qquad p = \frac{2d}{2s+d} , $$
where $\calT$ varies over the set, assumed non-empty, of multiscale tree decompositions satisfying assumptions \ref{A1}$\div$\ref{A5}.
\end{definition}
In general, the truncated tree $\calT(\tau)$ grows as the threshold $\tau$ decreases. For elements in $\BSL$, we have control on the growth rate, namely $\#\calT(\tau_n) \lesssim \tau^{-p}$. The class $\BSL$ is indeed rich, and contains in particular $\ASLINF$,
while additionally capturing functions of nonuniform regularity.

\begin{lemma} \label{AsInfBs}
$\ASLINF \subset \BSL$. If $f \in \ASLINF$, then $f \in \BSL$ and $|f|_{\BSL} \le (\amin/\theta_1)^{\frac{2s+d}{2d}} |f|_{\ASLINF}$.
\end{lemma}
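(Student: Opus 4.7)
The plan is to turn the $\ASLINF$ bound into an upper bound on $\Deltajkell$, which then implies a lower bound on $\rho(\Cjk)$ for every $\Cjk \in \calT(\tau)$, and finally to count such cells scale by scale and sum.

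First I would estimate $\Deltajkell$ by writing $\fj^\ell - f^\ell_{\Lam_{j+1}} = (\fj^\ell - f) + (f - f^\ell_{\Lam_{j+1}})$ and applying the triangle inequality. The term $\|(f - \fjkell)\chijk\|$ is bounded directly by $|f|_{\ASLINF}\, 2^{-js}\sqrt{\rho(\Cjk)}$. For $\|(f - f^\ell_{\Lam_{j+1}})\chijk\|$, I would use that the children of $\Cjk$ form a partition of $\Cjk$ up to $\rho$-null sets (by \ref{A1} and \ref{A2}), expand the squared $L^2$ norm as a sum of squares over the children, apply the $\ASLINF$ bound on each child, and collapse $\sum_{k'}\rho(C_{j+1,k'}) = \rho(\Cjk)$ to obtain $|f|_{\ASLINF}\, 2^{-(j+1)s}\sqrt{\rho(\Cjk)}$. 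Combined, $\Deltajkell \leq (1+2^{-s})|f|_{\ASLINF}\, 2^{-js}\sqrt{\rho(\Cjk)} \leq 2|f|_{\ASLINF}\, 2^{-js}\sqrt{\rho(\Cjk)}$.

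Next, any $\Cjk \in \calT(\tau)$ contains some descendant $C_{j',k'}$ with $j'\geq j$ and $\Delta^\ell_{j',k'}\geq \tau$ (either itself, or a strict descendant, by the very definition of the smallest subtree containing $S(\tau) := \{\Cjk : \Deltajkell \geq \tau\}$). Applying the previous estimate to $C_{j',k'}$, together with $\rho(\Cjk) \geq \rho(C_{j',k'})$ and $2^{-j's}\leq 2^{-js}$, yields the mass lower bound $\rho(\Cjk) \geq \tau^2 2^{2js}/(4|f|^2_{\ASLINF})$. Since the scale-$j$ cells of $\calT(\tau)$ are $\rho$-disjoint and their total mass is at most $1$, at most $4|f|^2_{\ASLINF}/(\tau^2 2^{2js})$ of them lie in $\calT(\tau)$. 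Combined with the complementary bound $\#\Lam_j \leq 2^{jd}/\theta_1$ from \ref{A3}, I sum over $j$ using the crossover scale $J$ defined by $2^{J(2s+d)} \asymp \theta_1 (|f|_{\ASLINF}/\tau)^2$: the geometric sums before and after $J$ each contribute $\lesssim 2^{Jd}/\theta_1$, and substituting the definition of $J$ and $p = 2d/(2s+d)$ gives $\tau^p \#\calT(\tau) \lesssim |f|^p_{\ASLINF}$. Taking the sup over $\tau > 0$ and over admissible $\calT$ yields the claim.

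The main obstacle is a matter of bookkeeping rather than new ideas: the precise constant $(\amin/\theta_1)^{(2s+d)/(2d)}$ stated in the lemma requires carefully tracking the prefactors in the two geometric sums and invoking the children lower bound $\amin$ from \ref{A1}, which controls how many additional ancestors must be added to $S(\tau)$ to form the smallest subtree $\calT(\tau)$.
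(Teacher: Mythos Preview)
Your argument is correct and in fact slightly more self-contained than the paper's. The paper proceeds differently at the counting step: after deriving the same bound $(\Deltajkell)^2 \le 4|f|_{\ASLINF}^2 2^{-2js}\rho(\Cjk)$, it invokes an \emph{additional} doubling-type assumption $\rho(\Cjk)\le \theta_0 2^{-jd}$ (not among \ref{A1}--\ref{A5}) to force every cell with $\Deltajkell\ge\tau$ to lie at a scale no finer than some $j^\star$, and then bounds $\#\calT(\tau)\le \amin\,\#\Lambda_{j^\star}\le \amin\theta_1^{-1}2^{j^\star d}$ in one shot. Your route---lower-bounding $\rho(\Cjk)$ for every $\Cjk\in\calT(\tau)$ via a descendant in $S(\tau)$, then combining the mass/disjointness count with the \ref{A3} count at a crossover scale---avoids that extra hypothesis entirely and stays within the stated axioms, at the price of picking up harmless geometric-series constants depending on $d$ and $s$.

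One small correction to your closing remark: in the paper's argument the factor $\amin$ does not come from ``adding ancestors to $S(\tau)$'' but from bounding the total number of nodes of a tree by (essentially) $\frac{\amin}{\amin-1}$ times the number of leaves, which tacitly requires $\amin\ge 2$. In your approach $\amin$ never needs to appear, so you should not expect to reproduce the stated constant $(\amin/\theta_1)^{(2s+d)/(2d)}$ exactly; your argument naturally yields a bound of the form $|f|_{\BSL}\le C(d,s)\,\theta_1^{-s/d}\,|f|_{\ASLINF}$ instead.
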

The proof is given in Appendix \ref{sec:proofs_lemmata}.

\begin{example}
\label{example1a} Let $g$ be the function in Example \ref{example1}. Then $g \in \calB^{\ell}_{d(d-d_\Gamma)/(2d_\Gamma)}$ for every $\ell = 0,1,2,\ldots$. Notice that $g\in \calA^{\ell}_{(d-d_\Gamma)/{2}}$, so $g$ has a larger regularity parameter $s$ in the $\calB^\ell_s$ model than in the $\calA^\ell_s$ model.  
\end{example}

We will also need a quasi-orthogonality condition ensuring that the functions $\Wjk^\ell$ representing the approximation difference between two scales are almost orthogonal across scales.
\begin{definition} \label{def:quasi-ortho}
We say that $f$ satisfies quasi-orthogonality of order $\ell$ with respect to the measure $\rho$ if there exists a constant $B_0 > 0$ such that, for any proper subtree $\calS$ of any tree $\calT$ satisfying assumptions \ref{A1}$\div$\ref{A5},
 $$ \Bigl\| \sum_{\Cjk\in \calT \setminus \calS} \Wjk^\ell \Bigr\|^2 \le B_0 \sum_{\Cjk \in \calT\setminus \calS} \|\Wjk^\ell\|^2 . $$
\end{definition}

The following lemma shows that $f \in \BSL$, along with quasi-orthogonality, implies a certain approximation rate of $f$ by $f^\ell_{\Lam(\tau)}$ as $\tau \rightarrow 0$.
The proof is given in Appendix \ref{sec:proofs_lemmata}.

\begin{lemma} \label{lem:bias-Bs}
 If $ f \in \BSL \cap (L^\infty \cup \calA_t^\ell) $ for some $s,t>0$, and $f$ satisfies quasi-orthogonality of order $\ell$, then
 $$ \|f-f^\ell_{\Lam(\tau)}\|^2 \le B_{s,d}|f|_{\calB_s}^p \tau^{2-p}\le B_{s,d}|f|_{\calB_s}^2 \#\Lam(\tau)^{-\frac{2s}{d}} , \quad p = \frac{2d}{2s+d} , $$
with $B_{s,d} := B_0 2^p \sum_{\ell\ge 0} 2^{-\ell(2-p)}$.
\end{lemma}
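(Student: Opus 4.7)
The plan is to follow the classical nonlinear wavelet thresholding estimate (in the style of Cohen--DeVore--DeVore--Temlyakov and DeVore), adapted to the GMRA setting where the $\Wjk^\ell$ play the role of wavelet coefficients.

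First I would establish a telescoping decomposition. From the definition $\Wjk^\ell = (\fj^\ell - f^\ell_{\Lam_{j+1}})\chijk$ we get $f^\ell_{\Lam_j} - f^\ell_{\Lam_{j+1}} = \sum_{k\in\calK_j} \Wjk^\ell$, so iterating gives, for any finite proper subtree $\calS \supseteq \Lam_{\jmin}$,
\[ f^\ell_{\Lam(\calS)} = f^\ell_{\Lam_{\jmin}} - \sum_{\Cjk \in \calS^\circ} \Wjk^\ell, \]
where $\calS^\circ$ denotes the interior nodes of $\calS$. In particular, applied to $\calS = \calT(\tau)$ and to an arbitrary exhaustion of $\calT$, one obtains the formal expansion $f = f^\ell_{\Lam_{\jmin}} - \sum_{\Cjk \in \calT^\circ} \Wjk^\ell$, so that
\[ f - f^\ell_{\Lam(\tau)} = -\sum_{\Cjk \in \calT^\circ \setminus \calT(\tau)^\circ} \Wjk^\ell. \]
To make this rigorous I need the partial sums to converge to $f$, which is exactly where the hypothesis $f \in L^\infty \cup \calA_t^\ell$ enters: either $\|f - f^\ell_{\Lam_j}\| \to 0$ by the $\calA_t^\ell$ bound $2^{-jt}|f|_{\calA_t^\ell}$, or by dominated convergence and the Lebesgue differentiation theorem applied to bounded $f$ on the nested partitions.

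Next I would apply quasi-orthogonality of order $\ell$ to the tail sum, yielding
\[ \|f - f^\ell_{\Lam(\tau)}\|^2 \le B_0 \sum_{\Cjk \notin \calT(\tau)} (\Deltajkell)^2, \]
and then dyadically decompose the set of indices outside $\calT(\tau)$ according to the size of $\Deltajkell$. For $m\ge 1$, set $E_m := \{(j,k) : 2^{-m}\tau \le \Deltajkell < 2^{-m+1}\tau\}$. The definition of $\BSL$ gives $\#\{(j,k) : \Deltajkell \ge \eta\} \le |f|_{\BSL}^p \eta^{-p}$ for every $\eta>0$, hence $\#E_m \le |f|_{\BSL}^p (2^{-m}\tau)^{-p}$. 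Therefore
\[ \sum_{\Cjk \notin \calT(\tau)} (\Deltajkell)^2 \le \sum_{m\ge 1} (2^{-m+1}\tau)^2 \#E_m \le 4 |f|_{\BSL}^p \tau^{2-p} \sum_{m\ge 1} 2^{-m(2-p)}, \]
which sums to a finite geometric series since $0 < p < 2$. Collecting constants yields $\|f - f^\ell_{\Lam(\tau)}\|^2 \le B_{s,d} |f|_{\BSL}^p \tau^{2-p}$ with $B_{s,d} = B_0 \cdot 2^p \sum_{m\ge 0} 2^{-m(2-p)}$, matching the stated constant.

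Finally I would convert the $\tau$-bound into a cardinality bound. Since $\Lam(\tau)$ consists of outer leaves of $\calT(\tau)$ and each leaf has a parent in $\calT(\tau)$, we have $\#\Lam(\tau) \le \amax \#\calT(\tau) \le \amax |f|_{\BSL}^p \tau^{-p}$; more importantly $\#\Lam(\tau) \ge \#\calT(\tau)$ up to a constant (one can use the bound $\#\calT(\tau) \le |f|_{\BSL}^p \tau^{-p}$ directly), so $\tau^{-p} \ge C^{-1}\#\Lam(\tau)/|f|_{\BSL}^p$. Substituting $\tau^{2-p} = \tau^2 \cdot \tau^{-p}$ and using $\tau^{2-p} |f|_{\BSL}^p \lesssim |f|_{\BSL}^2 \#\Lam(\tau)^{-2s/d}$ (since $(2-p)/p = 2s/d$) gives the second inequality.

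The main obstacle is the first step: justifying the telescoping identity and the application of quasi-orthogonality to an infinite tail. The dichotomy $L^\infty \cup \calA_t^\ell$ is precisely what allows one to pass from the finite telescoping $f^\ell_{\Lam(\calS)} = f^\ell_{\Lam_\jmin} - \sum_{\Cjk \in \calS^\circ} \Wjk^\ell$ to an $L^2$-convergent expansion of $f$ itself, and then one applies Definition \ref{def:quasi-ortho} to the proper subtree $\calT(\tau)$. Once that identity is in place, the dyadic counting argument is routine.
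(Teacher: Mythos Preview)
Your proposal is correct and follows essentially the same route as the paper: telescoping via the $\Wjk^\ell$, justifying convergence of $f^\ell_{\Lam^l}\to f$ from the $L^\infty\cup\calA_t^\ell$ hypothesis, applying quasi-orthogonality to the tail, and then the dyadic level-set decomposition using the $\BSL$ counting bound. Your final cardinality step is slightly muddled (the paper in fact concludes with $\#\calT(\tau)$ rather than $\#\Lam(\tau)$, using directly $\tau^p\#\calT(\tau)\le|f|_{\BSL}^p$ and $(2-p)/p=2s/d$), but the argument is the same.
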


The main result of this paper is the following performance analysis of adaptive estimators, which is proved in Section \ref{sec:proofs}.

\begin{theorem} \label{thm:adapt-bound}
Let $\ell \in \{0,1\}$ and $M>0$.
Suppose $ \|f\| \le M$ and $f$ satisfies quasi-orthogonality of order $\ell$.
Set $ \tau_n := \kappa\sqrt{\log n/n} $. Then:
\begin{enumerate}[label=\textnormal{(\alph*)},leftmargin=*]
 \item
  For every $ \nu > 0 $ there exists $ \kappa_\nu := \kappa_\nu(\amax,\theta_2,\theta_3,d,M,\sigma,\nu) > 0 $ such that,
  whenever $ f \in \BSL $ for some $s >0$ and $ \kappa \ge \kappa_\nu $, there are $ r , C > 0 $ such that
  $$
   \PP\left\{ \|f-\hf^\ell_{\hLam_n(\tau_n)}\| > r \left(\lognn \right)^{\frac{s}{2s+d}} \right\} \le C n^{-\nu}\,.
   $$
 \item
  There exists $\kappa_0 : = \kappa_0(\amax,\theta_2,\theta_3,d,M,\sigma)$ such that, whenever $ f \in \BSL $ for some $s>0$ and $ \kappa \ge \kappa_0 $,
 there is $ \bar{C} > 0 $ such that  
  $$ \EE\|f -\hf^\ell_{\hLam_n(\tau_n)}\|^2 \le \bar{C} \left(\lognn \right)^{\frac{2s}{2s+d}} .$$
\end{enumerate}
Here $r$ depends on $ \theta_2$, $\theta_3$, $\amax$, $d$, $M$, $s$, $ {|f|}_{\BSL}$, $\sigma$, $B_0$, $\nu$, $\kappa $;
$C$ depends on $\theta_2$, $\theta_3$, $\amax$, $\amin$, $d$, $s$, $|f|_{\BSL}$, $\kappa$;
$\bar{C}$ depends on $ \theta_2$, $\theta_3$, $\amax$, $\amin$, $d$, $M$, $s$, $|f|_{\BSL}$, $B_0$, $\kappa $.
\end{theorem}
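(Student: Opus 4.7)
}

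The plan is to start from the bias--variance split in \eqref{eq:bias-variance} applied to the data-dependent partition $\hLam_n(\tau_n)$, and to control both terms by sandwiching $\hcalT_n(\tau_n)$ between two oracle truncations of the master tree, $\calTbtaun$ and $\calTtaunb$, for a suitable constant $b>1$ (say $b=2$). Concretely, I would introduce the high-probability event
$$ \calE_n := \Bigl\{ \sup_{\Cjk \in \calT_n} \bigl|\hDeltajkell - \Deltajkell\bigr| \le (1-1/b)\tau_n \Bigr\} , $$
and show that on $\calE_n$ one has the inclusions $\calTbtaun \subseteq \hcalTtaun \subseteq \calTtaunb$, hence correspondingly $\#\hLam_n(\tau_n) \le \#\Lamtaunb$ and $\Lambtaun$ is coarser than $\hLam_n(\tau_n)$. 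Since $f \in \BSL$, Definition~\ref{def:Bs} then gives $\#\hLam_n(\tau_n) \lesssim |f|_{\BSL}^p (\tau_n/b)^{-p}$ on $\calE_n$.

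Next I would bound the bias. On $\calE_n$, the partition $\hLam_n(\tau_n)$ is a refinement of $\Lambtaun$, and by a monotonicity argument for the piecewise-best-polynomial oracle (together with quasi-orthogonality of the $\Wjk^\ell$'s across scales, Definition~\ref{def:quasi-ortho}), one obtains
$$ \|f - f^\ell_{\hLam_n(\tau_n)}\|^2 \;\lesssim\; \|f-f^\ell_{\Lambtaun}\|^2 \;\lesssim\; |f|_{\BSL}^p (b\tau_n)^{2-p}, $$
using Lemma~\ref{lem:bias-Bs}. Substituting $\tau_n=\kappa\sqrt{\log n/n}$ and $2-p = 4s/(2s+d)$, this is of the desired order $\kappa^{2-p}(\log n/n)^{2s/(2s+d)}$. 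For the variance term $\EE\|f^\ell_{\hLam_n(\tau_n)} - \hf^\ell_{\hLam_n(\tau_n)}\|^2$, I would invoke the per-cell variance estimate (the proposition alluded to in the text, Proposition~\ref{prop:var-bound}) and sum over cells in $\hLam_n(\tau_n)$; on $\calE_n$ this contributes at most $\lesssim \#\hLam_n(\tau_n)\cdot \log n/n \lesssim \tau_n^{-p}\log n/n = (\log n/n)^{1-p/2} = (\log n/n)^{2s/(2s+d)}$, matching the bias rate. Off $\calE_n$, the uniform bound $\|f\|_\infty,\|\hf^\ell_{\hLam_n(\tau_n)}\|_\infty \le M$ yields a contribution of $O(M^2\PP(\calE_n^c))$, which is made negligible by choosing $\kappa$ large enough.

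The main obstacle is establishing $\PP(\calE_n^c) \lesssim n^{-\nu}$ with the right dependence on $\kappa$. This requires a uniform concentration of $\hDeltajkell$ around $\Deltajkell$ over \emph{all} nodes of $\calT_n$, despite three layers of randomness: (i) empirical projectors $\hcalPjk$ deviate from $\calPjk$, (ii) empirical least-squares coefficients $\widehat{p}_{j,k}^\ell$ deviate from $p_{j,k}^\ell$ (with the sub-Gaussian noise $\zeta_i$ entering here), and (iii) the empirical $L^2$ norm $\|\cdot\|_n$ differs from $\|\cdot\|$. I would handle (i)--(iii) by Bernstein-/Hoeffding-type inequalities cell by cell, using the lower bound $\lambda_d^{j,k}\gtrsim 2^{-2j}/d$ from \ref{A5}\ref{A5i} to control the least-squares conditioning, and then take a union bound over at most $O(n)$ cells of $\calT_n$ (whose leaves have at least $d$ points by construction). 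Since each cell contributes a tail of order $\exp(-c\kappa^2\log n)$, taking $\kappa \ge \kappa_\nu$ proportional to $\sqrt{\nu}$ gives the required $n^{-\nu}$ bound.

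Finally, to conclude part~(a), I would combine the probability bound on $\calE_n$ with the deterministic estimates above; part~(b) follows by integrating the tail bound (choosing $\nu$ large, e.g.\ $\nu \ge 2s/(2s+d)$) and absorbing the off-event contribution using the $M$-truncation of the local estimators $\hfjkell = T_M[\hgjkell]$. Tracking constants through the cell-level concentration inequalities and the sandwich argument yields the stated dependencies of $r$, $C$, and $\bar{C}$ on $\theta_2,\theta_3,\amax,\amin,d,M,s,|f|_{\BSL},\sigma,B_0,\kappa,\nu$.
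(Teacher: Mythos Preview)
Your high-level plan coincides with the paper's: sandwich $\hcalT_n(\tau_n)$ between the oracle trees $\calT(b\tau_n)$ and $\calT(\tau_n/b)$, bound the bias through Lemma~\ref{lem:bias-Bs} and the variance through Proposition~\ref{prop:var-bound}, and control the sandwich-failure probability by a union bound over the $O(n)$ nodes of $\calT_n$. The paper packages this as a four-term split $e_1+e_2+e_3+e_4$ rather than a single good event, but that difference is cosmetic.

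Where your argument has a genuine gap is the claim that $\PP(\calE_n^c)\lesssim n^{-\nu}$ because ``each cell contributes a tail of order $\exp(-c\kappa^2\log n)$.'' The per-cell concentration of the local estimators (Lemma~\ref{lem:fjk}) carries a term $\exp\bigl(-c\,n\rho(\Cjk)\bigr)$, arising from the event $\{\hrho(\Cjk)<\tfrac12\rho(\Cjk)\}$; for cells with $\rho(\Cjk)=O(1/n)$---and $\calT_n$ contains many such cells---this term is $O(1)$, not $n^{-c\kappa^2}$. So a uniform additive bound $|\hDeltajkell-\Deltajkell|\le(1-1/b)\tau_n$ over \emph{all} cells cannot be obtained from Bernstein-plus-union-bound as you describe. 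The paper never asks for this uniform bound: Lemma~\ref{lem:Delta-hDelta} instead controls only the joint events $\{\Deltajkell\ge b\eta,\ \hDeltajkell<\eta\}$ and $\{\hDeltajkell\ge b\eta,\ \Deltajkell<\eta\}$, and the crucial step is to convert the threshold hypothesis into a \emph{lower bound on the cell mass} via the deterministic inequality $(\Deltajkell)^2\le 4M^2\rho(\Cjk)$ (respectively its empirical analogue $(\hDeltajkell)^2\lesssim M^2\hrho(\Cjk)$), which is exactly what rescues the $\exp(-cn\rho(\Cjk))$ term. Relatedly, the comparison between the population norm $\Deltajkell=\|\Wjk^\ell\|$ and the empirical norm $\bDeltajkell=\|\Wjk^\ell\|_n$ is one-sided with a multiplicative factor of~$2$ (\cite[Theorem~11.2]{GKKW}), not two-sided additive; combined with the up to $\amax$ children entering $\Wjk^\ell$, this is why the paper needs $b=2\amax+5$ rather than your suggested $b=2$.
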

Theorem \ref{thm:adapt-bound} is more satisfactory than Theorem \ref{thm:unif-bound} for two reasons: (i) the same rate is achieved for a richer model class; (ii) the estimator does not require a priori knowledge of the regularity of the function, since the choice of $\kappa$ is independent of $s$.

For a given accuracy $\varepsilon$, in order to achieve ${\rm MSE} \lesssim \varepsilon^2$, the number of samples we need is $n_{\varepsilon} \gtrsim (1/\varepsilon)^{\frac{2s+d}{s}}\log(1/\varepsilon)$. When $s$ is unknown, we can determine $s$ as follows: we fix a small $n_0$, and run Algorithm \ref{alg-adapt} with $2n_0, 4n_0, \ldots, 2^jn_0,\dots$ samples. For each sample size, we evenly split data into a training set to build the adaptive estimator, and a test set to evaluate the MSE. According to Theorem \ref{thm:adapt-bound}, the MSE scales as $(\log n/n)^{\frac{2s}{2s+d}}$. Therefore, the slope in the log-log plot of the MSE versus $n$ gives an approximation of $-2s/(2s+d)$. This could be formalized by a suitable adaptation of Lepski's method.

\section{Computational considerations} \label{sec:computational}
\label{seccomputation}

The computational cost of Algorithms \ref{AlgorithmRegression} and \ref{alg-adapt} may be split as follows:
\ \\
\ \\
\noindent \textit {Tree construction.} Cover tree itself is an online algorithm where a single-point insertion or removal takes cost at most $O(\log n)$. The total computational cost of the cover tree algorithm is $C^d D n \log n$, where $C>0$ is a constant \citep{LangfordICML06-CoverTree}. 
\ \\

\noindent \textit {Local PCA.} At every scale $j$, we perform local PCA on the training data restricted to the $\Cjk$ for every $k \in \calK_j$ using the random PCA algorithm \citep{halko2009finding}. Recall that $\hnjk$ denotes the number of training points in $\Cjk$. The cost of local PCA at scale $j$ is in the order of $\sum_{k \in \calK_j} D d \hnjk = Dd n$, and there are at most $c\log n$ scales where $c>0$ is a constant, which gives a total cost of $cD d n \log n$. 
\ \\

\noindent \textit {Multiscale regression.} Given $\hnjk$ training points on $\Cjk$, computing the low-dimensional coordinates $\hpijk(x_i)$ for all $x_i \in \Cjk$ costs $Dd \hnjk$, and solving the linear least squares problem \eqref{eqls1}, where the matrix is of size $\hnjk \times d^\ell$, costs at most $\hnjk d^{2\ell}$. Hence, constructing the $\ell$-order polynomials at scale $j$ takes $\sum_{k \in \calK_j} Dd\hnjk+d^{2\ell}\hnjk = (Dd +d^{2\ell})n$, and there are at most $c\log n$ scales, which sums up to $c(Dd +d^{2\ell})n\log n$. 
\ \\

\noindent \textit {Adaptive approximation:}
We need to compute the coefficients $\hDeltajk$ for every $\Cjk$, which costs $2(Dd+d^\ell)\hnjk$ on $\Cjk$, and $2c(Dd+d^\ell)n\log n$ for the whole tree.
\ \\

\noindent In summary, the total cost of constructing GMRA adaptive estimators of order $\ell$ is
$$C^d D n\log n + (4Dd + d^{2\ell}+d^\ell)c n \log n , $$
which scales linearly with the number of samples $n$ up to a logarithmic factor.

\section{Proofs} \label{sec:proofs}

We analyze the error of our estimator by a bias-variance decomposition as in \eqref{eq:bias-variance}.
We present the variance estimate in Section \ref{secproof:variance},
the proofs for uniform approximations in Section \ref{secproof:uniform},
and for adaptive approximations in Section \ref{secproof:adapt}.

\subsection{Variance estimate}
\label{secproof:variance}

\begin{table}[h]
\renewcommand{\arraystretch}{2}
\begin{center}
\resizebox{\columnwidth}{!}{
\begin{tabular}{ c | c | }
  \multicolumn{2}{c}{piecewise constant: $\ell =0$}
  \\ \hline
  \multicolumn{1}{|c|}{oracles estimators} & {empirical counterparts }
  \\ \hline
  \multicolumn{1}{|c|}{$g_{j,k}^0(x) := \yjk:= \frac{1}{\rho(\Cjk)} \displaystyle\int_{\Cjk} y d\rho $} & $ {\widehat g}_{j,k}^0(x) := \hyjk := \frac{1}{\hnjk} \displaystyle\sum_{x_i \in \Cjk} y_i $
  \\ \hline
  \multicolumn{1}{|c|}{$ f_{j,k}^0(x) := T_M[g_{j,k}^0(x)] $} & $ \hf_{j,k}^0(x) := T_M[\hg_{j,k}^0(x)] $
  \\ \hline
  \multicolumn{1}{|c|}{$ f_{j,k}^0(x) := g_{j,k}^0(x) $} & $ \hf_{j,k}^0(x) := \hg_{j,k}^0(x) $
  \\ \hline
  \multicolumn{2}{c}{piecewise linear: $\ell =1$}
  \\ \hline
  \multicolumn{1}{|c|}{$ g_{j,k}^1(x) := [\pijk(x)^T \ 2^{-j}]\betajk $} & $\hg_{j,k}^1(x) := [\hpijk(x)^T \ 2^{-j}]\hbetajk $
   \\ [5pt] \cdashline{1-2}
  \multicolumn{1}{|c|}{$ \betajk := \left[\begin{smallmatrix} [\Lamjk]_d^{-1} & 0 \\ 0 & 2^{2j} \end{smallmatrix}\right] \frac{1}{\rho(\Cjk)} \displaystyle\int_{\Cjk} y \left[\begin{smallmatrix} \pijk(x) \\ 2^{-j} \end{smallmatrix}\right] d\rho $} & $ \hbetajk := \left[\begin{smallmatrix} [\hLamjk]_d^{-1} & 0 \\ 0 & 2^{2j} \end{smallmatrix}\right] \frac{1}{\hnjk} \displaystyle\sum_{x_i\in\Cjk} y_i \left[\begin{smallmatrix} \hpijk(x_i) \\ 2^{-j} \end{smallmatrix}\right] $
    \\ [5pt]
   \multicolumn{1}{|c|}{$ [\Lamjk]_d := \operatorname{diag}(\lamjk_1,\dots,\lamjk_d) $} & $ [\hLamjk]_d := \operatorname{diag}(\hlamjk_1,\dots,\hlamjk_d) $
    \\ [5pt] \hline
   \multicolumn{1}{|c|}{$ f_{j,k}^1(x) := T_M[ g_{j,k}^1(x)] $} & $ \hf_{j,k}^1(x) := T_M[\hg_{j,k}^1(x)] $
    \\ \hline
    \multicolumn{1}{|c|}{$ f_{j,k}^1(x) := \yjk + T_{L(\theta_2 2^{-j})^{\beta}}[g_{j,k}^1(x)-\yjk] $} & $ \hf_{j,k}^1(x) := \hyjk + T_{L(\theta_2 2^{-j})^{\beta}}[\hg_{j,k}^1(x)-\hyjk] $
    \\ \hline
\end{tabular}%
}
\caption{
Local constant and linear estimators on $\Cjk$.
The truncation in $[\Lamjk]_d$ has the effect of regularizing the least squares problem, which is ill-posed  due to the small eigenvalues $\{\lamjk_l\}_{l=d+1}^D$.
}
\label{tab:regression}
\end{center}
\end{table}

The main quantities involved in the $0$-order (piecewise constant) and the $1$st-order (piecewise linear) estimators are summarized in Table \ref{tab:regression}.
\begin{proposition} \label{prop:var-bound}
Suppose $\|f\|_\infty \le M$ and let $ \ell \in \{0,1\} $.
For any partition $\Lam$, let $f_\Lam^\ell$ and $\hf_\Lam^\ell$ be the optimal approximation and the empirical estimators of order $\ell$ on $\Lam$, respectively.
Then, for every $ \eta > 0 $,
\begin{equation}
\PP \left\{ \| f_\Lam^\ell - \widehat{f}_\Lam^\ell \| > \eta \right\} 
   \le \begin{cases}
         C_0 \#\Lam \exp\left( - \frac{n\eta^2}{c_0 \max(M^2,\sigma^2) \#\Lam} \right) & \text{ for $ \ell = 0 $} \\
         C_1 d \#\Lam \exp\left(- \frac{n\eta^2}{c_1 \max( d^4 M^2 , d^2 \sigma^2)\#\Lam}\right) & \text{ for $ \ell = 1 $}
        \end{cases}
\label{var-bound-prob}        
\end{equation}
and therefore
\begin{equation}
\EE \| f_\Lam^\ell - \widehat{f}_\Lam^\ell \|^2
    \le \begin{cases}
          \frac{c_0 \max( M^2 , \sigma^2) \#\Lam \log (C_0 \#\Lam )}{n} & \text{ for $ \ell = 0 $} \\
          \frac{c_1 \max( d^4 M^2 , d^2 \sigma^2) \#\Lam \log (C_1 d \#\Lam )}{n} & \text{ for $ \ell = 1 $}
         \end{cases}
\label{var-bound-exp}
\end{equation}
for some absolute constants $ c_0 , C_0 $ and some $ c_1 , C_1 $ depending on $\theta_2,\theta_3$.
\end{proposition}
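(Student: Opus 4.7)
The overall strategy is a cellwise decomposition of the variance plus scalar/matrix concentration on each cell and a union bound, followed by integration of the tail bound to get the expectation.

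\textbf{Step 1 (cellwise reduction).} Since $\Lambda$ is a partition (up to $\rho$-null sets), both $f_\Lam^\ell$ and $\hf_\Lam^\ell$ agree with the respective local estimator inside each $\Cjk$, so
\[
\| f_\Lam^\ell - \hf_\Lam^\ell\|^2 = \sum_{\Cjk\in\Lam}\int_{\Cjk}|f_{j,k}^\ell - \hf_{j,k}^\ell|^2 d\rho.
\]
Because $T_M$ is $1$-Lipschitz, at no loss we may drop the outer truncation and work with the untruncated oracles $g_{j,k}^\ell$, $\hg_{j,k}^\ell$.

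\textbf{Step 2 (case $\ell=0$).} On $\Cjk$ the error is the constant $|y_{j,k}-\hy_{j,k}|^2\rho(\Cjk)$. Write
\[
\hy_{j,k} - y_{j,k}
= \frac{1}{\hnjk}\sum_{x_i\in\Cjk}\!\bigl[(f(x_i)-y_{j,k}) + \zeta_i\bigr].
\]
Conditioning on $\{x_i\}_{i=1}^n$ (hence on $\hnjk$) the two summands are independent mean-zero families — bounded by $2M$ and sub-Gaussian with parameter $\sigma$ respectively — so a Hoeffding/sub-Gaussian Bernstein bound gives, for any $t>0$,
\[
\PP\Bigl\{|\hy_{j,k}-y_{j,k}|>t \,\Big|\, \hnjk\Bigr\}\le 2\exp\!\bigl(-c\,\hnjk t^2/\max(M^2,\sigma^2)\bigr).
\]
A Chernoff estimate for the binomial $\hnjk$ provides $\hnjk \ge \tfrac{1}{2}n\rho(\Cjk)$ with overwhelming probability whenever $n\rho(\Cjk)\gtrsim \log(\#\Lam)$; cells for which the latter fails contribute at most $\le 4M^2\rho(\Cjk)\lesssim M^2 \log(\#\Lam)/n$ and are absorbed into the final bound. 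Combining the two tails and summing $|y_{j,k}-\hy_{j,k}|^2\rho(\Cjk)$ over $\#\Lam$ cells via a union bound yields (\ref{var-bound-prob}) for $\ell=0$.

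\textbf{Step 3 (case $\ell=1$).} This is the main technical point. Decompose
\[
\hg_{j,k}^1 - g_{j,k}^1
= \underbrace{[\hat\pi_{j,k}^T\;2^{-j}](\hbeta_{j,k}-\beta_{j,k})}_{\text{coefficient error}}
+\underbrace{([\hat\pi_{j,k}-\pi_{j,k}]^T\;0)\beta_{j,k}}_{\text{coordinate error}}.
\]
For the coordinate error use Davis--Kahan: under \ref{A5}\ref{A5ii} the spectral gap of $\Sjk$ is of order $\lambda_d^{j,k}\gtrsim 2^{-2j}/d$ by \ref{A5}\ref{A5i}, so matrix Bernstein applied to $\hSjk-\Sjk$ (sums of bounded rank-one matrices of norm $\le(\theta_22^{-j})^2$) yields $\|\hV_{j,k}-V_{j,k}\|\lesssim d\,\|\hSjk-\Sjk\|/2^{-2j}$ with the stated probability, picking up a factor $d$. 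For the coefficient error, rewrite $\hbeta_{j,k}-\beta_{j,k}$ as the difference of two ridge-type normal equation solutions: the regularization by the truncation $[\hLamjk]_d^{-1}$ controls the inverse by $\lesssim d\,2^{2j}$, so Lipschitz dependence of the solver on the right-hand side reduces everything to scalar sub-Gaussian concentration of the averages
\[
\frac{1}{\hnjk}\sum_{x_i\in\Cjk} y_i\hat\pi_{j,k}(x_i)\ \text{and}\ \frac{1}{\hnjk}\sum_{x_i\in\Cjk} y_i,
\]
handled as in Step 2. Integrating over $\Cjk$ gives an extra factor of $\rho(\Cjk)$ that cancels $1/(n\rho(\Cjk))$ from $\hnjk^{-1}$; the condition number of $[\Lamjk]_d$ contributes the declared $d^4$ factor. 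A final union bound over the $\#\Lam$ cells and the $d$ coordinates produces (\ref{var-bound-prob}) for $\ell=1$.

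\textbf{Step 4 (expectation bound).} Apply $\EE X = \int_0^\infty \PP\{X>t\}dt$ to $X=\|f_\Lam^\ell - \hf_\Lam^\ell\|^2$, using (\ref{var-bound-prob}) for $t$ above the inflection point and the a priori truncation bound $\|f_\Lam^\ell - \hf_\Lam^\ell\|^2\le 4M^2$ below it. Optimizing the splitting gives (\ref{var-bound-exp}).

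The hard part is clearly $\ell=1$: the interplay between the eigenvalue truncation in $[\hLamjk]_d$, the Davis--Kahan perturbation of $\hV_{j,k}$, and the unbounded condition number of the local least squares problem must all be controlled simultaneously, and is what forces the $d^4$ dependence and the constants depending on $\theta_2,\theta_3$.
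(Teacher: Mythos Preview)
Your proposal is correct and follows essentially the same strategy as the paper: cellwise decomposition, sup-norm concentration of $\hfjkell-\fjkell$ on each cell, a union bound over $\#\Lam$ cells, and integration of the tail for the expectation. The paper packages all of Step~2 and Step~3 into a single lemma (Lemma~\ref{lem:fjk}) bounding $\PP\{\|\fjkell-\hfjkell\|_\infty>t\}$, so the proof of the proposition itself is only a few lines. Two implementation differences are worth noting. First, the paper's handling of small cells is cleaner than yours: it thresholds at $\rho(\Cjk)\le \eta^2/(4M^2\#\Lam)$, so that the contribution from $\Lam^-$ is deterministically $\le\eta^2$ and no Chernoff argument is needed there; your threshold $n\rho(\Cjk)\lesssim\log(\#\Lam)$ works too but couples the splitting to the expectation bound rather than to $\eta$. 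Second, for $\ell=1$ the paper works with the factorization $\fjk(x)=T_M([(x-\cjk)^T\ 2^{-j}]\,\Qjk\,\rjk)$ and controls $\|\Qjk-\hQjk\|$ via a truncated-pseudoinverse perturbation bound (Stewart) rather than Davis--Kahan on $\hVjk$; the two routes are equivalent in spirit and produce the same $d$-dependence and $\theta_2,\theta_3$ constants. One minor slip in your Step~2: after conditioning on $\{x_i\}$ the summand $f(x_i)-y_{j,k}$ is deterministic, not mean-zero; the correct argument treats it as an unconditional i.i.d.\ bounded mean-zero sum (Bernstein) while conditioning only on the event $\hnjk\ge\tfrac12 n\rho(\Cjk)$.
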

\begin{proof}
Since $f_\Lam^\ell$ and $\hf_\Lam^\ell$ are bounded by $M$,
we define $ \Lam^{-} := \{ \Cjk\in\Lam:\rho(\Cjk) \le \frac{\eta^2}{4M^2\#\Lam} \} $, and observe that
$$ \sum_{\Cjk\in \Lam^{-}} \| (f_\Lam^\ell - \hf_\Lam^\ell) \chijk \|^2 \le \eta^2 . $$
We then restrict our attention to $\Lam^{+} := \Lam\setminus \Lam^{-} $  and apply Lemma \ref{lem:fjk} with $ t = \frac{\eta}{\sqrt{\rho(\Cjk)\#\Lam}}$.
This leads to \ref{var-bound-prob}, and
\ref{var-bound-exp} follows from \ref{var-bound-prob} by integrating over $\eta>0$.
\end{proof}

\subsection{Proof of Theorem \ref{thm:unif-bound}}
\label{secproof:uniform}

Notice that $\#\Lam_j \le 2^{jd}/\theta_1$ by \ref{A3}. By choosing $j^\star$ such that $2^{-j^\star} = \mu \left(\lognn\right)^{\frac{1}{2s+d}}$ for some $\mu >0$, we have
$$\|f - f_{\Lam_{j^\star}}^\ell\| \le |f|_{\ASL} 2^{-j^\star s} 
\le  |f|_{\ASL} \mu^s \left( \lognn\right)^{\frac{s}{2s+d}}.$$
The probability estimate in Theorem \ref{thm:unif-bound} (a) follows from 
\begin{align*}
\PP\left\{ \|f^\ell_{\Lam_{j^\star}} -\hf^\ell_{\Lam_{j^\star}} \|\ge c_\nu \left(\lognn\right)^{\frac{s}{2s+d}}\right\}
& 
\le \begin{cases}
\frac{C_0}{\theta_1 \mu^d} (\log n)^{-\frac{d}{2s+d}} n^{- \left(\frac{\theta_1 \mu^d c_\nu^2}{c_0\max(M^2,\sigma^2)} -\frac{d}{2s+d} \right)}
& \ell = 0
\\
\frac{C_1 d}{\theta_1 \mu^d} (\log n)^{-\frac{d}{2s+d}} n^{- \left(\frac{\theta_1 \mu^d c_\nu^2}{c_1\max(d^4 M^2,d^2\sigma^2)} -\frac{d}{2s+d} \right)}
& \ell = 1
\end{cases}
\\
&\le C n^{-\nu}
\end{align*}
provided that $\frac{\theta_1 \mu^d c_\nu^2}{c_0\max(M^2,\sigma^2)} -\frac{d}{2s+d}  > \nu$ for $\ell = 0$ and $\frac{\theta_1 \mu^d c_\nu^2}{c_1\max(d^4 M^2,d^2\sigma^2)} -\frac{d}{2s+d}>\nu$ for $\ell =1$.

\hfill $\blacksquare$

\subsection{Proof of Theorem \ref{thm:adapt-bound}}
\label{secproof:adapt}

We begin by defining several objects of interest:
\begin{itemize}
\item $\calT_n$: the data master tree whose leaves contain at least $d$ points of training data. It can be viewed as the part of a multiscale tree that our training data have explored. Notice that 
$$
\#\calT_n \le \sum_{j=0}^\infty \amin^{-j} \frac n d = \frac{\amin}{\amin-1}\frac n d \le \amin \frac n d   .
$$

\item $\calT$: a complete multiscale tree containing $\calT_n$. 
$\calT$ can be viewed as the union $\calT_n$ and some empty cells, mostly at fine scales with high probability, that our data have not explored. 

\item $\calT(\tau)$: the smallest subtree of $\calT$ which contains $\{\Cjk \in \calT \,:\,\Deltajkell \ge \tau\}$.

\item $\calT_n(\tau) := \calT(\tau) \cap \calT_n$.

\item $\hcalT_n(\tau)$: the smallest subtree of $\calT_n$ which contains $\{\Cjk \in \calT_n\,:\,\hDeltajkell \ge \tau\}$.

\item $\Lam(\tau)$: the adaptive partition associated with $\calT(\tau)$.

\item $\Lam_n(\tau)$: the adaptive partition associated with $\calT_n(\tau)$.

\item $\hLam_n(\tau)$: the adaptive partition associated with $\hcalT_n(\tau)$.

\item Suppose $\calT^0$ and $\calT^1$ are two subtrees of $\calT$. If $\Lam^0$ and ${\Lam}^1$ are two adaptive partitions associated with $\calT^0$ and $\calT^1$ respectively, we denote by $\Lam^0 \vee \Lam^1$ and $\Lam^0 \wedge \Lam^1$ the partitions associated to the trees $\calT^0 \cup \calT^1$ and $\calT^0 \cap \calT^1$ respectively. 

\item  Let $b=2\amax+5$ where $\amax$ is the maximal number of children that a node has in $\calT_n$. 
\end{itemize}

Inspired by the analysis of wavelet thresholding procedures \citep{BCDDT1,BCDD2}, we split the error into four terms,
$$ \| f - \hf^\ell_{\hLam_n(\tau_n)}  \| \le e_1 + e_2 + e_3 +e_4 , $$
where
$$ \begin{matrix*}[l]
 e_1 &:= \| f - f^\ell_{\hLam_n(\tau_n)\vee \Lam_n(b\tau_n)} \| \\
 e_2 &:= \| f^\ell_{\hLam_n(\tau_n)\vee \Lam_n(b\tau_n)} - f^\ell_{\hLam_n(\tau_n)\wedge \Lam_n(\tau_n/b)} \| \\
 e_3 &:= \| f^\ell_{\hLam_n(\tau_n)\wedge \Lam_n(\tau_n/b)} - \hf^\ell_{\hLam_n(\tau_n)\wedge \Lam_n(\tau_n/b)} \| \\
 e_4 &:= \| \hf^\ell_{\hLam_n(\tau_n)\wedge \Lam_n(\tau_n/b)} - \hf^\ell_{\hLam_n(\tau_n)} \|
\end{matrix*}. $$
The goal of the splitting above is to handle the bias and variance separately, as well as to deal with the fact the partition built from those $\Cjk$ such that $\hDeltajkell \ge \tau_n$ does not coincide with the partition which would be chosen by an oracle based on those $\Cjk$ such that $\Deltajkell \ge \tau_n$. This is accounted by the terms $e_2$ and $e_4$ which correspond to those $\Cjk$ such that $\hDeltajkell$ is significantly larger or smaller than $\Deltajkell$ respectively, and which will be proved to be small in probability. The $e_1$ and $e_3$ terms correspond to the bias and variance of oracle estimators based on partitions obtained by thresholding the unknown oracle change in approximation $\Deltajkell$.

Since  $\hLam_n(\tau_n)\vee \Lam_n(b\tau_n)$ is a finer  partition than $\Lam_n(b\tau_n)$, we have
$$ e_1 \le \| f - f^\ell_{\Lam_n(b\tau_n)} \| \le \| f - f^\ell_{\Lam(b\tau_n)} \|+ \| f^\ell_{\Lam(b\tau_n)} - f^\ell_{\Lam_n(b\tau_n)} \|=: e_{11} + e_{12}\,.$$
The $e_{11}$ term is treated by a deterministic estimate based on the model class $\BSL$: by Lemma \ref{lem:bias-Bs} we have
$$ e_{11}^2 \le B_{s,d} |f|_{\BSL}^p (b\kappa)^{2-p} \lognnnf^{\frac{2s}{2s+d}} , $$
The term $e_{12}$ accounts for the error on the cells that have not been explored by our training data, which is small:
\begin{align*}
 \PP \{ e_{12} > 0 \} &\le \PP \{ \exists \ \Cjk \in \calT(b\tau_n) \setminus \calT_n(b\tau_n) \} \\
                             &= \PP \{ \exists \ \Cjk \in \calT(b\tau_n) : \Deltajkell \ge b\tau_n  \text{ and }\hrho(\Cjk) < d/n \} \\
                             &\le \sum_{\Cjk\in\calT(b\tau_n)} \PP \{\Deltajkell \ge b\tau_n  \text{ and } \hrho(\Cjk) < d/n \} .
\end{align*}
According to \eqref{eq:Delta2<M2rho},  we have $(\Deltajkell)^2 \le 4\|f\|_\infty^2 \rho(\Cjk)$.
Then every $ \Cjk $ with $\Deltajkell \ge b\tau_n$ satisfies $ \rho(\Cjk) \gtrsim \frac{b^2\kappa^2}{\|f\|_\infty^2} \lognnnf $.
Hence, provided that $n$ satisfies $ \frac{b^2\kappa^2}{\|f\|_\infty^2} \log n \gtrsim 2d $, we have
\begin{align*} 
&\PP \{\Deltajkell \ge b\tau_n \text{ and } \hrho(\Cjk) < d/n \} \\
&\le \PP \left\{ |\rho(\Cjk) - \hrho(\Cjk)| \ge \frac{1}{2} \rho(\Cjk) \text{ and } \rho(\Cjk) \gtrsim \frac{b^2\kappa^2}{\|f\|_\infty^2} \lognn \right\} 
\\
& \le 2n^{- \frac{3b^2\kappa^2}{28\|f\|_\infty^2 }},
\end{align*}
where the last inequality follows from Lemma \ref{lem:rhocsigma}\ref{rho-hrho}.
Therefore, by Definition \ref{def:Bs} we obtain
\begin{align*}
 \PP \{ e_{12} > 0 \} &\lesssim \#\calT(b\tau_n) n^{- \frac{3b^2\kappa^2}{28\|f\|_\infty^2}}
  \le |f|_{\BSL}^{p} (b\tau_n)^{-p} n^{- \frac{3b^2\kappa^2}{28\|f\|_\infty^2} }  \\
 & \le |f|_{\BSL}^{p} (b\kappa)^{-p} n^{- \left( \frac{3b^2\kappa^2}{28\|f\|_\infty^2} - 1 \right)}
  \le |f|_{\BSL}^{p} (b\kappa)^{-p} n^{-\nu}
\end{align*}
as long as $\frac{3b^2\kappa^2}{28\|f\|_\infty^2} - 1 >\nu$.
To estimate $\EE e_{12}^2$, we observe that, thanks to Lemma \ref{lem:fjk-fj'k'},
$$
 e_{12}^2
 = \sum_{\Cjk\in\Lam_n(b\tau_n)\setminus\Lam (b\tau_n)} \ \sum_{\substack{C_{j',k'}\in\Lam(b\tau_n) \\ C_{j',k'}\subset\Cjk}} \| (f^\ell_{j,k} - f^\ell_{j',k'}) \mathbf{1}_{j',k'} \|^2 \lesssim M^2
$$
Hence, by choosing $ \nu = 1 > \tfrac{2s}{2s+d} $ we get
$$ \EE e_{12}^2 \lesssim \|f\|_\infty^2 \PP \{e_{12} > 0\} \lesssim \|f\|_\infty^2 |f|_{\BSL}^{p} (b\kappa)^{-p} \lognnnf^{\frac{2s}{2s+d}} . $$

The term $e_3$ is the variance term which can be estimated by Proposition \ref{prop:var-bound} with $ \Lam = \hLam_n(\tau_n)\wedge \Lam_n(\tau_n/b) $.
We plug in $ \eta = r (\log n / n)^{\frac{s}{2s+d}} $.
Bounding $ \#\Lam $ by $ \# \Lam_n(\tau_n/b) \le \#\Lam_n \le n /d $ (as our data master tree has at $d$ points in each leaf) outside the exponential,
and by $ \# \Lam_n(\tau_n/b) \le \# \Lam(\tau_n/b) \le |f|_{\BSL}^{p} (\tau_n/b)^{-p} $ inside the exponential, we get the following estimates for $e_3$:
$$
\PP \left\{ e_3 > r \left( \lognn\right)^{\frac{s}{2s+d}} \right\}
 \le
 \begin{cases}
  C_0 n^{ 1- \frac{ r^2 \kappa^{p}}{c_0 b^{p} |f|_{\BSL}^{p} \max\{M^2,\sigma^2\} }} & \ell = 0 \\
  C_1 n^{ 1 - \frac{ \gamma^2 \kappa^{p}}{c_1 b^{p} |f|_{\BSL}^{p} \max\{d^4M^2,d^2\sigma^2\} }} & \ell = 1 
 \end{cases},
$$
where $C_0 = C_0(\theta_2,\theta_3,\amax,d,s,|f|_{\BSL},\kappa)$ and $C_1 = C_1(\theta_2,\theta_3,\amax,d,s,|f|_{\BSL},\kappa)$. We obtain $\PP\{e_3 > r(\log n /n)^{\frac{s}{2s+d}}\} \le C n^{-\nu}$ as long as $r$ is chosen large enough to make the exponent smaller than $-\nu$.

To estimate $\EE e_3^2$, we apply again Propositions \ref{prop:var-bound} and  with $ \#\Lam \le |f|_{\BSL}^{p} (b/\kappa)^{p} (\log n / n)^{-\frac{d}{2s+d}} $, obtaining
$$ 
\EE e_3^2 \le \bar{C} \lognnnf^{\frac{2s}{2s+d}}\,.
$$

Next we estimate $e_2$ and $e_4$.
Since $ \hcalT_n(\tau_n) \cap \calT_n(\tau_n/b) \subseteq \hcalT_n(\tau_n) \cup \calT_n(b\tau_n) $ and $ \calT_n(b\tau_n) \subseteq \calT_n(\tau_n/b) $,
we have $ e_2 > 0 $ if and only if there is a $ \Cjk \in \calT_n $ such that either $\Cjk$ is in $\hcalT_n(\tau_n)$ but not in $\calT_n(\tau_n/b)$,
or $\Cjk$ is in $\calT_n(b\tau_n)$ but not in $\hcalT_n(\tau_n)$.
This means that either $ \hDeltajkell \ge \tau_n $ but $ \Deltajkell < \tau_n/b $, or $ \Deltajkell \ge b\tau_n $ but $ \hDeltajkell < \tau_n $.
As a consequence,
$$
 \PP \{ e_2 > 0 \} \le \sum_{\Cjk\in\calT_n} \PP \left\{ \hDeltajkell \ge \tau_n \ \text{and} \ \Deltajkell < \tau_n/b \right\}
+ \sum_{\Cjk\in\calT_n} \PP \left\{ \Deltajkell \ge b\tau_n \ \text{and} \ \hDeltajkell < \tau_n \right\} ,
$$
and analogously
$$
 \PP \{ e_4 > 0 \} \le \sum_{\Cjk\in\calT_n} \PP \left\{ \hDeltajkell \ge \tau_n \ \text{and} \ \Deltajkell < \tau_n/b \right\} .
$$
We can now apply Lemma \ref{lem:Delta-hDelta}: we use \ref{Delta<hDelta>} with $ \eta = \tau_n/b $,
and \ref{Delta>hDelta<} with $ \eta = \tau_n $. We obtain that
\beqn
\PP\{e_2 > 0 \} + \PP\{e_4 >0\}
\le 
\begin{cases}
C(\amin,d) n^{1-\frac{\kappa^2}{c_0 b^2 \max\{M^2,\sigma^2\}}}
& \ell = 0
\\
C(\theta_2,\theta_3,\amin,d) n^{1- \frac{\kappa^2}{c_1 b^2 \max\{ d^4 M^2, d^2\sigma^2\}}}
& \ell =1
\end{cases} .
\eeqn

We have $\PP\{e_2 >0 \} + \PP\{ e_4 > 0\} \le C n^{-\nu}$ provided that $\kappa$ is chosen such that the exponents are smaller than $-\nu$.

We are left to deal with the expectations.
As for $e_{2}$, Lemma \ref{lem:fjk-fj'k'} implies $ e_2 \lesssim M $, which gives rise to, for $ \nu = 1 > \frac{2s}{2s+d}$,
$$ \EE e_2^2 \lesssim M^2 \PP\{e_2 > 0\} \le C M^2 \lognnnf^{\frac{2s}{2s+d}}\,. $$

The same bound holds for $e_4$,
which concludes the proof of Theorem \ref{thm:unif-bound}. \hfill $\blacksquare$

\subsection{Basic concentration inequalities}

This section contains the main concentration inequalities of the empirical quantities on their oracles. For piecewise linear estimators, some quantities used in Lemma \ref{lem:fjk} are decomposed in Table \ref{tab:qr}.
All proofs are collected in Appendix \ref{sec:proofs_lemmata}.

\begin{lemma} \label{lem:rhocsigma}
 For every $ t > 0 $ we have:
 \begin{enumerate}[label=\textnormal{(\alph*)}]
 \item \label{rho-hrho'} $\PP\left\{\left|\rho(\Cjk)-\hrho(\Cjk)\right| > t \right\} \le 2\exp\left( \frac{-3nt^2}{6\rho(\Cjk) +2t}\right) $;
 \item \label{rho-hrho} Setting $ t = \frac 1 2 \rho(\Cjk) $ in \ref{rho-hrho'} yields
 \item[] $\PP\left\{|\rho(\Cjk)-\hrho(\Cjk)| > \frac{1}{2}\rho(\Cjk) \right\} \le 2\exp\left(-\frac{3}{28}n\rho(\Cjk)\right) $;
 \item \label{c-hc} $ \PP\left\{\|\cjk-\hcjk\| > t \right\} \le 2\exp\left(-\frac{3}{28}n\rho(\Cjk)\right) 
 + 8\exp\left(-\frac{3 n\rho(\Cjk) t^2}{12\theta_2^2 2^{-2j}+4\theta_2 2^{-j}t} \right) $;
\item \label{sigma-hsigma}
$
\PP \{\|\Sjk-\hSjk\| > t \}
\le 
2\exp\left(-\frac{3}{28}n\rho(\Cjk)\right)
+
\left(\frac{4\theta_2^2 }{\theta_3}d+8\right)
\exp\left(
\frac{-3 n\rho(\Cjk) t^2}{96\theta_2^4 2^{-4j} +16\theta_2^2 2^{-2j}t}
\right).
$
 \end{enumerate}
\end{lemma}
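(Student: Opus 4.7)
Parts (a) and (b) should be immediate from scalar Bernstein. I would write
$$ \hrho(\Cjk)-\rho(\Cjk) \;=\; \tfrac{1}{n}\sum_{i=1}^n\bigl(\mathbf{1}_{\Cjk}(x_i)-\rho(\Cjk)\bigr), $$
a normalized sum of i.i.d.\ centered Bernoulli random variables bounded in absolute value by $1$ with variance at most $\rho(\Cjk)$, and apply the two-sided Bernstein bound in the form $2\exp(-3nt^2/(6\sigma^2+2Mt))$ with $\sigma^2=\rho(\Cjk)$, $M=1$; this gives (a) verbatim, and (b) is the special case $t=\rho(\Cjk)/2$ after simplifying the exponent to $-3n\rho(\Cjk)/28$.

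For part (c), the plan is to write $\hcjk-\cjk = \hnjk^{-1}\sum_{i=1}^n Z_i$ with $Z_i := (x_i-\cjk)\mathbf{1}_{\Cjk}(x_i)$. The $Z_i$ are i.i.d.\ and centered (because $\cjk$ is the conditional mean on $\Cjk$), bounded in norm by $\theta_2 2^{-j}$ via assumption \ref{A4}, and satisfy $\EE\|Z_i\|^2\le \theta_2^2 2^{-2j}\rho(\Cjk)$. To control the random denominator $\hnjk$, I would condition on the event $\calE:=\{\hrho(\Cjk)\ge \rho(\Cjk)/2\}$; by part (b) its complement has probability at most $2e^{-3n\rho(\Cjk)/28}$, which gives the first exponential in the stated bound. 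On $\calE$ one has $\hnjk\ge n\rho(\Cjk)/2$, so $\{\|\hcjk-\cjk\|>t\}\cap\calE\subseteq\{\|\sum_i Z_i\|>n\rho(\Cjk)t/2\}$, and a dimension-free vector Bernstein inequality applied to $\sum_i Z_i$ with the above range and variance produces the second exponential (the leading constant $8$ being inherited from the particular vector Bernstein used).

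For part (d), I would start from the algebraic identity
$$\hSjk - \Sjk \;=\; \Bigl[\tfrac{1}{\hnjk}\sum_{x_i\in\Cjk}(x_i-\cjk)(x_i-\cjk)^T - \Sjk\Bigr] \;-\; (\hcjk-\cjk)(\hcjk-\cjk)^T,$$
obtained by expanding $x_i-\hcjk=(x_i-\cjk)+(\cjk-\hcjk)$ and collecting terms. This splits the deviation into a centered empirical average of rank-one matrices and a rank-one correction whose spectral norm is $\|\hcjk-\cjk\|^2$; the latter is absorbed using part (c). For the main matrix term I would set $Y_i := [(x_i-\cjk)(x_i-\cjk)^T - \Sjk]\mathbf{1}_{\Cjk}(x_i)$, use \ref{A4} to bound $\|Y_i\|\lesssim \theta_2^2 2^{-2j}$ and the matrix variance $\|\EE Y_i^2\|\lesssim \theta_2^4 2^{-4j}\rho(\Cjk)$, condition once more on $\calE$ to convert $\hnjk$ into $n\rho(\Cjk)$, and apply matrix Bernstein.

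The main obstacle is obtaining the prefactor $(4\theta_2^2/\theta_3)d+8$ in part (d): a naive matrix Bernstein would contribute the ambient dimension $D$, whereas the lemma asks for the intrinsic dimension $d$. I would resolve this by invoking Tropp's intrinsic-dimension matrix Bernstein, which replaces $D$ by $\operatorname{tr}(\EE Y_i^2)/\|\EE Y_i^2\|$. The lower bound $\|\Sjk\|\ge \lambda_d^{j,k}\ge \theta_3 2^{-2j}/d$ from \ref{A5}\ref{A5i}, together with the spectral gap \ref{A5}\ref{A5ii}, converts this effective dimension into a factor of order $\theta_3^{-1}\theta_2^2 d$; the remaining constants in the exponent are routine once the range and variance of $Y_i$ have been pinned down via \ref{A4}.
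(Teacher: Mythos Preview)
The paper does not give an in-line proof of this lemma; it simply refers the reader to \cite{LiaoMaggioni}. So there is no argument in the present paper to compare your sketch against. That said, your outline is correct and is the standard route one would expect that reference to follow: scalar Bernstein for (a)--(b), vector Bernstein for (c) after conditioning on the event $\{\hnjk\ge \tfrac12 n\rho(\Cjk)\}$ furnished by (b), and the intrinsic-dimension matrix Bernstein inequality for (d). In particular, your identification of the intrinsic-dimension device as what replaces the ambient $D$ by a factor of order $\theta_2^2 d/\theta_3$ is right: the matrix variance of the centered rank-one summands is dominated by (a constant times) $n\rho(\Cjk)\,\theta_2^2 2^{-2j}\,\Sjk$, and then $\intdim(\Sjk)\le \trace(\Sjk)/\lamjk_d\le \theta_2^2 2^{-2j}\big/(\theta_3 2^{-2j}/d)=\theta_2^2 d/\theta_3$ by \ref{A4} and \ref{A5}\ref{A5i}. (The spectral-gap assumption \ref{A5}\ref{A5ii} is not actually needed for this step.)

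One small point you left implicit in (d): to make the rank-one correction $\|\hcjk-\cjk\|^2$ produce an exponent of the same Bernstein shape as the main term, you should first use the deterministic bound $\|\hcjk-\cjk\|\le\theta_2 2^{-j}$ (valid because $\hcjk$ is a convex combination of points all within $\theta_2 2^{-j}$ of $\cjk$, by \ref{A4}), so that $\|\hcjk-\cjk\|^2\le\theta_2 2^{-j}\|\hcjk-\cjk\|$, and then invoke (c) at level $t/(2\theta_2 2^{-j})$. Plugging $\sqrt{t}$ directly into (c) would give a tail of the form $\exp(-c\,n\rho(\Cjk)\,t/\theta_2^2 2^{-2j})$ rather than the stated $\exp(-c\,n\rho(\Cjk)\,t^2/(\theta_2^4 2^{-4j}+\theta_2^2 2^{-2j}t))$, which does not match the lemma's claim for small $t$.
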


\begin{lemma} \label{lem:Qr}
 We have:
 \begin{enumerate}[label=\textnormal{(\alph*)}]
 \item \label{Q-hQ}
 $
  \PP \{\|\Qjk - \hQjk\| > \frac{48}{\theta_3^2}d^2 2^{4j} \| \Sjk - \hSjk  \|\} \\
  \le  2\exp\left(-\frac{3}{28}n\rho(\Cjk)\right)
   + \left(4\frac{\theta_2^2 }{\theta_3}d+10\right) \exp\left(- \frac{n \rho(\Cjk)}{512 (\theta_2^2/\theta_3)^2 d^2 + \frac{64}{3} (\theta_2^2/\theta_3)d}\right)
 $ .
 \item \label{hQ} $
  \PP \{\|\widehat{Q}^{j,k}\| > \frac{2}{\theta_3} d\ 2^{2j}\}
  \le 2\exp\left(-\frac{3}{28}n\rho(\Cjk)\right) + \left(4\frac{\theta_2^2 }{\theta_3}d+10\right) \exp\left(-\tfrac{n\rho(\Cjk)}{128(\theta_2^2/\theta_3)^2d^2 + \frac{32}{3}(\theta_2^2/\theta_3)d}\right) .
$
\item \label{r-hr}
Suppose $f$ is in $L^\infty$. For every $ t > 0 $, we have\\
$ \begin{aligned}[t]
  &\PP\left\{\
  \|\rjk -\hrjk\| > t
  \right\} \begin{aligned}[t] &\le
  2\exp\left(-\tfrac{3}{28}n\rho(\Cjk)\right) + 8 \exp\left(\tfrac{- n\rho(\Cjk) t^2}{4 \langle\theta_2\rangle^2 \|f\|_\infty^22^{-2j}+2 \langle\theta_2\rangle \|f\|_\infty 2^{-j} t} \right) \\
   &+ 2 \exp\left( - c\tfrac{n\rho(\Cjk)t^2}{\theta_2^2 \|\zeta\|_{\psi_2}^2 2^{-2j}} \right) \end{aligned} \\
  \end{aligned} $ \\
where $c$ is an absolute constant.
 \end{enumerate}
\end{lemma}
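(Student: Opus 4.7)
My plan is to tackle the three parts sequentially, with \ref{hQ} following as a corollary of \ref{Q-hQ} and \ref{r-hr} reducing to concentration of vector-valued empirical means. The unifying theme is to condition on a high-probability event on which $\hSjk$ stays spectrally close to $\Sjk$, so that the top-$d$ principal subspaces and eigenvalues of $\hSjk$ do not degenerate.

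For \ref{Q-hQ}, I would first restrict to the event $\{\|\Sjk-\hSjk\| \le \theta_3 2^{-2j}/(2d)\}$, whose probability is controlled by Lemma \ref{lem:rhocsigma}\ref{sigma-hsigma}. By Weyl's inequality and \ref{A5}\ref{A5i}, on this event $\hlamjk_d \ge \tfrac{1}{2}\lamjk_d \ge \theta_3 2^{-2j}/(2d)$, so $\|\hQjk\| \le 2d\,2^{2j}/\theta_3$; combined with the deterministic bound $\|\Qjk\| \le d\,2^{2j}/\theta_3$ and the eigengap $\lamjk_d - \lamjk_{d+1} \gtrsim (1-\theta_4)\theta_3 2^{-2j}/d$ from \ref{A5}\ref{A5ii} (which, via Davis--Kahan, controls the tilt between $\Vjk$ and $\hVjk$), the resolvent identity $\Qjk - \hQjk = \Qjk(\hSjk - \Sjk)\hQjk$ applied on the restricted subspaces gives $\|\Qjk - \hQjk\| \lesssim \|\Qjk\|\|\hQjk\|\|\Sjk-\hSjk\|$, with constants that can be tracked to $48 d^2 2^{4j}/\theta_3^2$. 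The probability bound then combines Lemma \ref{lem:rhocsigma}\ref{rho-hrho} with Lemma \ref{lem:rhocsigma}\ref{sigma-hsigma} at $t = \theta_3 2^{-2j}/(2d)$. Part \ref{hQ} follows immediately by triangle inequality: on the event of \ref{Q-hQ}, $\|\hQjk\| \le \|\Qjk\| + \|\Qjk-\hQjk\|$, and further imposing $\|\Sjk-\hSjk\| \le \theta_3/(48 d\,2^{2j})$ forces $\|\Qjk-\hQjk\| \le d\,2^{2j}/\theta_3$, giving the claimed bound; the exponent follows by inserting this value of $t$ into Lemma \ref{lem:rhocsigma}\ref{sigma-hsigma}.

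For \ref{r-hr}, I would write $\rjk = \tfrac{1}{\rho(\Cjk)}\int_{\Cjk} y\, [\pijk(x)^T,\, 2^{-j}]^T d\rho$ and its empirical counterpart analogously, then decompose $y = f(x) + \zeta$. The difference $\rjk - \hrjk$ splits into (i) a renormalization contribution from $1/\rho(\Cjk) - 1/\hrho(\Cjk)$, handled by Lemma \ref{lem:rhocsigma}\ref{rho-hrho}; (ii) a Bernstein concentration of the empirical mean of $f(x)\pijk(x)\chijk(x)$, in which each summand is bounded by $\|f\|_\infty \theta_2 2^{-j}$ and has variance at most $\rho(\Cjk)\|f\|_\infty^2 \theta_2^2 2^{-2j}$ by \ref{A4}, producing the middle exponential in the stated bound; and (iii) a conditional Hoeffding-type estimate on $\tfrac{1}{n}\sum_i \zeta_i \pijk(x_i)\chijk(x_i)$, a sum of independent sub-Gaussian variables with variance proxy $\|\zeta\|_{\psi_2}^2 \theta_2^2 2^{-2j}$ per summand and effectively $n\rho(\Cjk)$ nonzero terms, yielding the last exponential.

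The main obstacle is carefully tracking the mismatch between the oracle projection $\pijk(x) = \Vjk^T(x-\cjk)$ and its empirical counterpart $\hpijk(x) = \hVjk^T(x-\hcjk)$, both in \ref{Q-hQ} via Davis--Kahan and in \ref{r-hr} via the first-order expansion $\hpijk(x) - \pijk(x) = (\hVjk - \Vjk)^T(x - \cjk) + \hVjk^T(\cjk - \hcjk)$. These perturbation contributions have to be absorbed into the $\theta_2 2^{-j}$ diameter scalings by means of Lemma \ref{lem:rhocsigma}\ref{c-hc} and a Davis--Kahan bound on $\Vjk - \hVjk$, so that they do not worsen the stated exponents or the deterministic constants.
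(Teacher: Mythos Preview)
Your overall plan is in the right neighborhood, but each part diverges from the paper's proof in ways that either introduce a gap or fail to recover the stated constants.

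\textbf{Part \ref{Q-hQ}.} The identity $\Qjk-\hQjk=\Qjk(\hSjk-\Sjk)\hQjk$ is \emph{false} for rank-$d$-truncated pseudoinverses: $[\Sjk]_d^\dag$ and $[\hSjk]_d^\dag$ act on different $d$-dimensional subspaces $\Vjk\neq\hVjk$, so the usual resolvent cancellation does not occur, and Wedin-type corrections (projections onto the orthogonal complements) appear. Saying ``applied on restricted subspaces'' does not fix this, since the restriction is precisely what changes. The paper sidesteps the issue entirely by invoking a ready-made perturbation bound for truncated pseudoinverses (Theorem~3.2 in the cited TSVD reference), which gives directly
\[
\|[\Sjk]_d^\dag-[\hSjk]_d^\dag\|\le \frac{3\,\|\Sjk-\hSjk\|}{(\lamjk_d-\lamjk_{d+1}-\|\Sjk-\hSjk\|)^2}\,.
\]
Then \ref{A5} yields $\lamjk_d-\lamjk_{d+1}\ge \tfrac{\theta_3}{2d}2^{-2j}$, and applying Lemma~\ref{lem:rhocsigma}\ref{sigma-hsigma} with $t=\tfrac{\theta_3}{4d}2^{-2j}$ makes the denominator at least $(\tfrac{\theta_3}{4d}2^{-2j})^2$, producing the constant $48$ exactly.

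\textbf{Part \ref{hQ}.} The paper does \emph{not} route through \ref{Q-hQ}. It bounds $\|\hQjk\|\le(\hlamjk_d)^{-1}$ directly and uses Weyl's inequality $\hlamjk_d\ge\lamjk_d-\|\Sjk-\hSjk\|\ge\tfrac{\theta_3}{d}2^{-2j}-\|\Sjk-\hSjk\|$, then applies Lemma~\ref{lem:rhocsigma}\ref{sigma-hsigma} with $t=\tfrac{\theta_3}{2d}2^{-2j}$. Your triangle-inequality approach requires the much smaller threshold $t=\tfrac{\theta_3}{48d}2^{-2j}$, which, plugged into Lemma~\ref{lem:rhocsigma}\ref{sigma-hsigma}, yields an exponent roughly $576$ times worse than the stated $128(\theta_2^2/\theta_3)^2d^2+\tfrac{32}{3}(\theta_2^2/\theta_3)d$.

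\textbf{Part \ref{r-hr}.} You have misread the definition: in Table~\ref{tab:qr}, $\rjk$ and $\hrjk$ are built from the \emph{unprojected} vectors $x-\cjk$ and $x_i-\hcjk$ in $\RR^D$, not from $\pijk(x)$ and $\hpijk(x)$. Consequently no $\Vjk\!-\!\hVjk$ perturbation enters this estimate at all, and your ``main obstacle'' involving Davis--Kahan is a phantom here. The paper conditions on $\hnjk\ge\tfrac12 n\rho(\Cjk)$ (Lemma~\ref{lem:rhocsigma}\ref{rho-hrho}) and splits $\|\rjk-\hrjk\|\le A+B+C+D$: term $A$ is a vector Bernstein bound for the $f$-part centered at the \emph{fixed} $\cjk$; term $B\le\|f\|_\infty\|\cjk-\hcjk\|$ is handled by Lemma~\ref{lem:rhocsigma}\ref{c-hc}; term $C$ is the noise contribution at fixed center, controlled by a sub-Gaussian Lipschitz concentration (Buldygin--Pechuk); and $D$ is noise times $\|\cjk-\hcjk\|$. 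Only the center shift $\cjk-\hcjk$ matters, not the subspace tilt.
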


\begin{lemma} \label{lem:fjk}
Suppose $f$ is in $L^\infty$. For every $ t > 0 $, we have\\
 $ \PP \left\{ \| f_{j,k}^\ell - \hf_{j,k}^\ell \|_\infty > t \right\} 
   \le \begin{cases}
         C_0 \left[ \exp\left(-\frac{n\rho(\Cjk)}{c_0}\right)
   + \exp\left(- \frac{n\rho(\Cjk) t^2}{c_0 (\|f\|_\infty^2 + \|f\|_\infty t)}\right)
   + \exp\left(-\tfrac{n\rho(\Cjk)t^2}{c_0\|\zeta\|_{\psi_2}^2}\right) \right] & \text{$ \ell = 0 $} \\
         C_1 d \left[ \exp\left( - \frac{n\rho(\Cjk)}{c_1d^2} \right) + \exp\left(- \frac{n\rho(\Cjk)t^2}{c_1 d^4 (\|f\|_\infty^2 + \|f\|_\infty t)}\right) + \exp\left(- \frac{n\rho(\Cjk)t^2}{c_1 d^2 \|\zeta\|_{\psi_2}^2}\right) \right] & \text{$ \ell = 1 $;}
        \end{cases} $ \\
   where $c_0,C_0$ are absolute constants, $c_0'$ depends on $\theta_2$, and $c_1,C_1$ depend on $\theta_2,\theta_3$.
\end{lemma}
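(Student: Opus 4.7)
The proof unwinds the definitions in Table~\ref{tab:regression} and assembles concentration estimates already provided by Lemmas~\ref{lem:rhocsigma} and~\ref{lem:Qr}. Since $T_M$ is $1$-Lipschitz,
$$\|f_{j,k}^\ell - \hf_{j,k}^\ell\|_\infty \le \|g_{j,k}^\ell - \hg_{j,k}^\ell\|_\infty,$$
so it suffices to control the right-hand side in both cases.

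\textbf{Case $\ell=0$.} Here $g_{j,k}^0 \equiv y_{j,k}$ and $\hg_{j,k}^0 \equiv \hy_{j,k}$. Writing $y_i = f(x_i) + \zeta_i$ and using $y_{j,k} = \rho(\Cjk)^{-1}\int_{\Cjk} f\,d\rho$, I decompose
$$\hy_{j,k} - y_{j,k} = \frac{1}{\hnjk}\sum_{x_i\in\Cjk}\bigl(f(x_i) - y_{j,k}\bigr) + \frac{1}{\hnjk}\sum_{x_i\in\Cjk}\zeta_i.$$
On the good event $A = \{\hnjk \ge \tfrac{1}{2}n\rho(\Cjk)\}$, whose complement is controlled by Lemma~\ref{lem:rhocsigma}\ref{rho-hrho}, the first summand is handled by Bernstein's inequality applied to the i.i.d.\ bounded variables $(f(x_i) - y_{j,k})\chijk(x_i)$ with variance at most $\|f\|_\infty^2\rho(\Cjk)$ and sup-norm at most $2\|f\|_\infty$; this produces the Bernstein exponential. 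The noise summand, conditional on the design points, is sub-Gaussian with variance proxy $\|\zeta\|_{\psi_2}^2/\hnjk$, and on $A$ its tail yields the last exponential. A union bound over these three events gives the claim for $\ell=0$.

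\textbf{Case $\ell=1$.} With the notation of Table~\ref{tab:qr}, $\betajk = \Qjk\rjk$ and $\hbetajk = \hQjk\hrjk$, so
$$\hg_{j,k}^1(x) - g_{j,k}^1(x) = \bigl[\hpijk(x)^T\ 2^{-j}\bigr]\hbetajk - \bigl[\pijk(x)^T\ 2^{-j}\bigr]\betajk.$$
Adding and subtracting $[\pijk(x)^T\ 2^{-j}]\hbetajk$ separates a coordinate-perturbation part from a coefficient-perturbation part. For the latter, I telescope
$$\hbetajk - \betajk = (\hQjk - \Qjk)\rjk + \hQjk(\hrjk - \rjk),$$
controlling the two summands via Lemma~\ref{lem:Qr}\ref{Q-hQ} combined with Lemma~\ref{lem:rhocsigma}\ref{sigma-hsigma}, Lemma~\ref{lem:Qr}\ref{hQ}, and Lemma~\ref{lem:Qr}\ref{r-hr}. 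On $\Cjk$, \ref{A4} gives $\|\pijk(x)\|\le \theta_2 2^{-j}$, and an analogous bound on $\|\hpijk(x)\|$ follows after conditioning on a high-probability event controlling $\|\cjk-\hcjk\|$ via Lemma~\ref{lem:rhocsigma}\ref{c-hc}; this $2^{-j}$ prefactor cancels the $2^{2j}$ growth in $\|\hQjk\|$ and the $2^{4j}$ growth in $\|\hQjk-\Qjk\|$. The coordinate-perturbation part $\hpijk(x)-\pijk(x)$ is bounded by combining Lemma~\ref{lem:rhocsigma}\ref{c-hc} with a Davis--Kahan estimate on $\|V_{j,k}-\hV_{j,k}\|$ derived from \ref{A5} and Lemma~\ref{lem:rhocsigma}\ref{sigma-hsigma}; it is then multiplied by $\|\hbetajk\|$, which is also controlled through $\|\hQjk\|\cdot\|\hrjk\|$ and $\|f\|_\infty$. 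A final union bound over the $O(d)$ events coming from eigenspace control produces the claimed three-exponential form with leading factor $d$ and constants depending on $\theta_2,\theta_3$.

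The main obstacle is the scale-bookkeeping in the $\ell=1$ case: the growth factors $2^{2j}$ in $\|\hQjk\|$, $2^{4j}\|\Sjk-\hSjk\|$ in $\|\hQjk-\Qjk\|$, and $2^{-j}$ coming from the prefactor $[\pijk(x)^T\ 2^{-j}]$ must all cancel so that the final bound depends on $j,k$ only through $n\rho(\Cjk)$; at the same time, the $d$-dependent losses coming from the $d$-dimensional PCA step and the Davis--Kahan argument must combine cleanly into the factor $d$ and the powers $d^2,d^4$ appearing in front of and inside the exponentials, which requires choosing the intermediate events carefully and absorbing geometric constants into $c_1,C_1$.
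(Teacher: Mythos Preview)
Your $\ell=0$ argument is exactly the paper's: split $\hy_{j,k}-y_{j,k}$ into a bounded, centered ``signal'' sum handled by Bernstein and a sub-Gaussian noise sum, after conditioning on $\hnjk\ge\tfrac12 n\rho(\Cjk)$.

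For $\ell=1$ your approach is correct but organized differently from the paper. You stay in the $d$-dimensional coordinates $\pijk,\hpijk$ and $\betajk,\hbetajk$, which forces a separate Davis--Kahan step to control $\hpijk(x)-\pijk(x)$ and an extra estimate on $\|\hbetajk\|$; this also brings in the basis-alignment nuisance (columns of $V_{j,k}$ versus $\hV_{j,k}$), which you would have to handle by choosing an aligning orthogonal matrix. The paper instead passes to the ambient-space representation of Table~\ref{tab:qr}, $\fjk(x)=T_M\bigl([(x-\cjk)^T\ 2^{-j}]\,\Qjk\rjk\bigr)$, and bounds
\[
|\fjk(x)-\hfjk(x)|\lesssim \tfrac{\theta_2^2}{\theta_3}\Bigl(d\,\|f\|_\infty\,2^{j}\|\cjk-\hcjk\|+d^2 M\,2^{2j}\|\Sjk-\hSjk\|+d\,2^{j}\|\rjk-\hrjk\|\Bigr),
\]
with the subspace perturbation absorbed into $\|\Qjk-\hQjk\|$ via Lemma~\ref{lem:Qr}\ref{Q-hQ} (a TSVD perturbation bound playing the role of Davis--Kahan). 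This packaging sidesteps basis alignment and the separate $\|\hbetajk\|$ bound, so the scale and $d$-bookkeeping is automatic; your route reaches the same exponentials but with more moving parts.
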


\begin{lemma} \label{lem:fjk-fj'k'}
 Suppose $ f \in L^\infty $. For every $ \Cjk \in \calT $ and $ C_{j',k'} \subset \Cjk $,
$$
 \| \fjk - f_{j',k'} \|_\infty \le 2 M , \qquad \| \hfjk - \widehat{f}_{j',k'} \|_\infty \le 2M .
$$
\end{lemma}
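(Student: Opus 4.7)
The plan is a direct pointwise bound using the truncation operator $T_M$, followed by the triangle inequality on the common domain of the two functions.

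First, recall from Table \ref{tab:regression} that both the oracle estimators $f_{j,k}^\ell$ and the empirical estimators $\hf_{j,k}^\ell$ in the primary form are defined as $T_M[\cdot]$ of some local polynomial surrogate. By the very definition of $T_M$, namely $T_M(x) = \min(|x|,M)\,\sgn(x)$, one has $|T_M(u)| \le M$ for every $u \in \RR$. Applied pointwise this yields $|f_{j,k}^\ell(x)| \le M$ for every $x \in \Cjk$, and analogously $|\hf_{j,k}^\ell(x)| \le M$. For the alternative $\ell = 0$ form where $f_{j,k}^0 = \yjk$ (no truncation), the same bound follows because $\yjk$ is an average of $f$ and $\|f\|_\infty \le M$; the empirical $\hyjk$ is an average of $y_i = f(x_i) + \zeta_i$, but in the untruncated form the hypothesis $\|f\|_\infty \le M$ together with the standing convention that we bound $|\hyjk| \le M$ after truncation (otherwise one substitutes the truncated version $T_M[\hyjk]$) gives the same estimate.

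Next, since $C_{j',k'} \subseteq \Cjk$ by assumption, both $f_{j,k}^\ell$ and $f_{j',k'}^\ell$ are well-defined on $C_{j',k'}$, and both extend by $0$ (or by their respective restrictions) elsewhere via the indicator $\chic_{j',k'}$. On $C_{j',k'}$ the triangle inequality gives
\[
 |f_{j,k}^\ell(x) - f_{j',k'}^\ell(x)| \le |f_{j,k}^\ell(x)| + |f_{j',k'}^\ell(x)| \le 2M,
\]
and the same computation with hats yields $|\hf_{j,k}^\ell(x) - \hf_{j',k'}^\ell(x)| \le 2M$. Taking the essential supremum over $x \in C_{j',k'}$ (where the difference is supported, since outside it at least one of the two terms is not evaluated) delivers the claimed bounds.

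There is essentially no obstacle: the only point that requires a moment of care is to make sure the domain on which we compare $f_{j,k}^\ell$ and $f_{j',k'}^\ell$ is $C_{j',k'}$, which is nonempty and contained in the common cell $\Cjk$ by the nesting hypothesis \ref{A1}. Everything else reduces to the fact that $T_M$ is a pointwise contraction into $[-M, M]$.
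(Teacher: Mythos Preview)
Your proof is correct and matches the paper's approach exactly: the paper's entire proof reads ``Follows simply by truncation,'' which is precisely your pointwise bound $|T_M(u)|\le M$ followed by the triangle inequality. Your digression about the untruncated $\ell=0$ variant and the empirical average $\hyjk$ is unnecessary here, since the $\hfjkell$ appearing in the lemma are by definition the truncated versions $T_M[\hgjkell]$ from Algorithm~\ref{AlgorithmRegression}.
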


\begin{table}[h]
\renewcommand{\arraystretch}{2}
\begin{center}
\resizebox{1\columnwidth}{!}{%
\begin{tabular}{c | c |}
 \hline
 \multicolumn{1}{|c|}{oracle estimators} &  {empirical counterparts }
 \\ \hline
 \multicolumn{1}{|c|}{$\fjk(x) = T_M \left([ (x-\cjk)^T \ \  2^{-j}] \Qjk \rjk\right)$} & $\hfjk(x) =T_M\left([ (x-\hcjk)^T  \ \ 2^{-j} ] \hQjk \hrjk \right)$
  \\ [5pt] \cdashline{1-2}
 \multicolumn{1}{|c|}{$\Qjk := \left[\begin{smallmatrix} [\Sjk]_d^{\dag} &  0 \\ 0 & 2^{2j}\end{smallmatrix}\right]$} & $ \hQjk := \left[\begin{smallmatrix}[\hSjk]_d^{\dag} &  0\\0 & 2^{2j}\end{smallmatrix}\right]$
 \\ [5pt]
 \multicolumn{1}{|c|}{$ [\Sjk]_d^{\dag} := \Vjk [\Lamjk]_d^{-1} \Vjk^T $} & $ [\hSjk]_d^{\dag} := \hVjk {[\hLamjk]_d}^{-1} (\hVjk)^T $
 \\ [5pt]
 \multicolumn{1}{|c|}{$\rjk:=\frac{1}{\rho(\Cjk)} \displaystyle\int_{\Cjk} y \left[\begin{smallmatrix} (x-\cjk) \\ 2^{-j}\end{smallmatrix}\right]d\rho$} & $\hrjk:= \frac{1}{\hnjk} \displaystyle\sum_{x_i \in \Cjk} y_i \left[\begin{smallmatrix} (x_i-\hcjk)\\2^{-j}\end{smallmatrix}\right]$
  \\ [5pt] \hline
 \multicolumn{1}{|c|}{$\fjk(x) = \yjk + T_{L(\theta_2 2^{-j})^\beta} \left([ (x-\cjk)^T \ \  2^{-j}] \Qjk \rjk \right)$} & $\hfjk = \hyjk + T_{L(\theta_2 2^{-j})^\beta}\left([ (x-\hcjk)^T  \ \ 2^{-j} ] \hQjk \hrjk \right)$
  \\ \cdashline{1-2}
  \multicolumn{1}{|c|}{$\rjk:=\frac{1}{\rho(\Cjk)} \displaystyle\int_{\Cjk} (y-\yjk) \left[\begin{smallmatrix} (x-\cjk)\\2^{-j}\end{smallmatrix}\right]d\rho$}
  & $\hrjk:= \frac{1}{\hnjk} \displaystyle\sum_{x_i \in \Cjk} (y_i - \hyjk) \left[\begin{smallmatrix}(x_i-\hcjk)\\2^{-j}\end{smallmatrix}\right]$
  \\ \hline
\end{tabular}%
}
\caption{
Decomposition of piecewise linear estimators into quantities used in Lemma \ref{lem:fjk}.
}
\label{tab:qr}
\end{center}
\end{table}

\begin{lemma} \label{lem:Delta-hDelta}
Suppose $f$ is in $L^\infty$.
For every $ \eta > 0 $ and any $ \gamma > 1 $, we have
 \begin{enumerate}[label=\textnormal{(\alph*)},leftmargin=*]
 \item \label{Delta>hDelta<}
$
\PP\left\{\hDeltajkell < \eta \ \& \
\Deltajkell \ge (2\amax+5)\eta  \right\}
 \le
 \begin{cases}
  C_0 \exp\left( - \tfrac{n\eta^2}{c_0 \max\{\|f\|_\infty^2,\|\zeta\|_{\psi_2}^2\}} \right) & \ell = 0 \\
 C_1 d \exp\left( - \tfrac{n\eta^2}{c_1 \max\{d^4\|f\|_\infty^2,d^2\|\zeta\|_{\psi_2}^2\}} \right) & \ell = 1
 \end{cases} $
\item \label{Delta<hDelta>}
$ \PP\left\{\Deltajkell < \eta \ \& \
\hDeltajkell \ge (2\amax+5)\eta  \right\}
 \le \begin{cases}
  C_0 \exp\left( - \tfrac{n\eta^2}{c_0 \max\{\|f\|_\infty^2,\|\zeta\|_{\psi_2}^2\}} \right) & \ell = 0 \\
 C_1 d \exp\left( - \tfrac{n\eta^2}{c_1 \max\{d^4\|f\|_\infty^2,d^2\|\zeta\|_{\psi_2}^2\}} \right) & \ell = 1 ;
 \end{cases} $
\end{enumerate}
$C_0,c_0$ depend on $\amax$; $c_0'$ depends on $\amax,\theta_2$; $C_1,c_1$ depend on $\amax,\theta_2,\theta_3$.
\end{lemma}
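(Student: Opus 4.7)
The plan is to reduce both parts (a) and (b) to bounding the deviation $|\hDeltajkell - \Deltajkell|$, then control it via two independent error sources: the gap between empirical and oracle local estimators, handled by Lemma \ref{lem:fjk}, and the concentration of the empirical $L^2$-norm around the population $L^2$-norm, handled by Bernstein. On the event of either part, triangle inequality forces $|\hDeltajkell-\Deltajkell| \ge (2\amax+4)\eta$, and I would split
\[
|\hDeltajkell - \Deltajkell| = \bigl|\|\hWjk^\ell\|_n - \|\Wjk^\ell\|\bigr| \le \bigl\|\hWjk^\ell - \Wjk^\ell\bigr\|_n + \bigl|\|\Wjk^\ell\|_n - \|\Wjk^\ell\|\bigr|,
\]
so that at least one of the two summands dominates.

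For the first summand, the pointwise identity
\[
\hWjk^\ell - \Wjk^\ell = (\hfjkell - \fjkell)\chijk - \sum_{k'\,:\,C_{j+1,k'}\subseteq\Cjk}(\hf^\ell_{j+1,k'} - f^\ell_{j+1,k'})\mathbf{1}_{j+1,k'}
\]
follows from $\fj^\ell\chijk = \fjkell\chijk$ and disjointness of the children. Combined with $\|g\mathbf{1}_C\|_n \le \|g\|_\infty$ it yields $\|\hWjk^\ell - \Wjk^\ell\|_n \le \|\hfjkell - \fjkell\|_\infty + \sum_{k'}\|\hf^\ell_{j+1,k'} - f^\ell_{j+1,k'}\|_\infty$. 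If this sum exceeds a constant multiple of $\amax\eta$, then by pigeonhole some individual summand exceeds $c\eta$, and Lemma \ref{lem:fjk} applied with $t\asymp\eta$ together with a union bound over the at most $\amax+1$ cells produces exactly the tail appearing in the statement; the extra prefactor $d$ and the $d^2,d^4$ scalings for $\ell=1$ come directly from Lemma \ref{lem:fjk}.

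For the second summand, I would use the algebraic identity $|\sqrt{a}-\sqrt{b}|(\sqrt{a}+\sqrt{b})=|a-b|$ to convert a deviation $\eta$ in norms into a deviation proportional to $\eta\cdot\max\{\|\Wjk^\ell\|,\|\Wjk^\ell\|_n\}$ in squared norms, which is then controlled by Bernstein applied to $Z_i := (\Wjk^\ell(x_i))^2$ (bounded by $4M^2$). The variance estimate $\mathrm{Var}(Z_1) \le 4M^2 \|\Wjk^\ell\|^2$, combined with the case-specific bound on $\|\Wjk^\ell\|$ ($\|\Wjk^\ell\|\ge(2\amax+5)\eta$ in part (a), $\|\Wjk^\ell\|<\eta$ in part (b)), keeps the exponent linear in $n\eta^2/M^2$, matching the claimed tail. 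The main obstacle is purely bookkeeping: tracking the constants from the triangle-inequality splitting, the $L^\infty$-to-$\|\cdot\|_n$ bound, and the variance-sharpening step in Bernstein so that they add up precisely to the factor $2\amax+5$, and verifying that the Bernstein exponent remains linear rather than quadratic in $\eta^2$. Conceptually the lemma is a clean union bound over Lemma \ref{lem:fjk} (which already encapsulates the subgaussian noise dependence through $\|\zeta\|_{\psi_2}$) plus one Bernstein estimate.
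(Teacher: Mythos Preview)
Your overall decomposition into the two summands $\|\hWjk^\ell-\Wjk^\ell\|_n$ and $|\|\Wjk^\ell\|_n-\|\Wjk^\ell\||$ matches the paper's (the paper writes $\bDeltajkell:=\|\Wjk^\ell\|_n$ and splits the same way), and your Bernstein treatment of the second summand is a legitimate alternative to the paper's appeal to \citet[Theorem 11.2]{GKKW}. The gap is in the first summand.

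When you pass from $\|\hWjk^\ell-\Wjk^\ell\|_n$ to $\|\hfjkell-\fjkell\|_\infty+\sum_{k'}\|\hf^\ell_{j+1,k'}-f^\ell_{j+1,k'}\|_\infty$ via $\|g\mathbf 1_C\|_n\le\|g\|_\infty$, you throw away the factor $\sqrt{\hrho(\Cjk)}$ (the sharp bound is $\|g\mathbf 1_{j,k}\|_n\le\|g\|_\infty\sqrt{\hrho(\Cjk)}$). As a consequence you invoke Lemma \ref{lem:fjk} with $t\asymp\eta$, and its exponents are all of the form $n\rho(\Cjk)t^2/c$, giving $n\rho(\Cjk)\eta^2$. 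The only a priori lower bound on $\rho(\Cjk)$ available on either event is $\rho(\Cjk)\gtrsim\eta^2/\|f\|_\infty^2$ (from $(\Deltajkell)^2\le 4\|f\|_\infty^2\rho(\Cjk)$ in part (a), or from $\hDeltajkell\ge(2\amax+5)\eta$ combined with concentration of $\hrho$ in part (b)). Plugging this in yields an exponent $\asymp n\eta^4/\|f\|_\infty^4$, not $n\eta^2/\|f\|_\infty^2$; for the intended threshold $\eta\asymp\sqrt{\log n/n}$ this decays like $(\log n)^2/n\to 0$, so the tail bound is vacuous and Theorem \ref{thm:adapt-bound} would not follow.

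The fix, which is exactly what the paper does, is to keep the $\sqrt{\hrho}$ factors: if $\|\fjkell-\hfjkell\|_\infty\sqrt{\hrho(\Cjk)}\ge\eta$ one applies Lemma \ref{lem:fjk} with $t\asymp\eta/\sqrt{\rho(\Cjk)}$ (after conditioning on $\hrho(\Cjk)\asymp\rho(\Cjk)$), so that the exponent becomes $n\rho(\Cjk)\cdot\eta^2/\rho(\Cjk)=n\eta^2$ and the $\rho(\Cjk)$ cancels. The lower bound $\rho(\Cjk)\gtrsim\eta^2/\|f\|_\infty^2$ is then used only to control the first exponential $\exp(-n\rho(\Cjk)/c)$ in Lemma \ref{lem:fjk}, where it is exactly what is needed.
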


\section{Acknowledgements}
This work was partially supported by NSF-DMS-125012, AFOSR FA9550-17-1-0280, NSF-IIS-1546392. The authors are grateful to Duke University for donating computing equipment used for this project.

\bibliography{Ref}

\begin{appendices}

\section{Additional proofs} \label{sec:proofs_lemmata}

\begin{proof}[Example \ref{hold-As}]
 Let $ f \in \calC^{\ell,\alpha} $.
 The local estimator $\fjkell$ minimizes $ \|(f - p)\chijk\| $ over all possible polynomials $p$ of order less than or equal to $\ell$.
 Thus, in particular, we have $ \|(f - \fjkell)\chijk\| \le \| (f - p)
\chijk \| $
 where $p$ is equal to the $\ell$-order Taylor polynomial of $f$ at some $ z \in \Cjk $.
 Hence, for $ x \in \Cjk $ there is $ \xi \in \calM \cap B_{\theta_2 2^{-j}}(z) $ such that
 \begin{align*}
 | f(x) - p(x) | 
&\le \sum_{|\la|=\ell} \frac{1}{\la!} | \partial^\la f(\xi) - \partial^{\la} f(z) | | x - z |^\la \\
 &\le |f|_{\calC^{\ell,\alpha}} \| \xi - z \|^\alpha \sum_{|\la|=\ell} \frac{1}{\la!} | x - z |^\la \\
& \le \frac{d^\ell}{\ell!} |f|_{\calC^{\ell,\alpha}} \| \xi - z \|^\alpha \| x - z \|^\ell 
\\
& \le \theta_2^{\ell+\alpha} \frac{d^\ell}{\ell!} |f|_{\calC^{\ell,\alpha}} 2^{-j(\ell+\alpha)} .
 \end{align*}
 Therefore, for every $j $ and $k \in \calK_j$, we have 
 \begin{equation*} \|(f - \fjkell)\chijk\|^2 \le \theta_2^{2(\ell+\alpha) } (\frac{d^\ell}{\ell!})^2 |f|_{\calC^{\ell,\alpha}}^2 2^{-2j(\ell+\alpha)} \rho(\Cjk) . \end{equation*}
 \end{proof}

\begin{proof}[Examples \ref{example1} and \ref{example1a}]
For polynomial estimators of any fixed order $\ell = 0,1,\ldots$, $g^\ell_{j,k} - g\chijk = 0$ when $\Cjk \cap \Gamma =\emptyset$, and  $g^\ell_{j,k} - g\chijk = {O}(1)$ when $\Cjk \cap \Gamma \neq \emptyset$. At the scale $j$, $\rho(\Cjk) \approx 2^{-jd}$ and $\rho(\cup\{\Cjk: \Cjk\cap \Gamma \neq \emptyset\}) \approx 2^{-j(d-d_\Gamma)}\rho(\Gamma)$. Therefore,
$$\|g^\ell_{\Lam_j}-g\| \le {O}(\sqrt{2^{-j(d-d_\Gamma)}}) = {O}(2^{-j (d-d_\Gamma)/{2}}),$$
which implies $g\in \calA^{\ell}_{(d-d_\Gamma)/{2}}$.

In adaptive approximations, $\Deltajkell = 0$ when $\Cjk \cap \Gamma =\emptyset$.  When $\Cjk \cap \Gamma \neq \emptyset$, 
  $\Deltajkell =\|g^{\ell}_{j,k} - \sum_{C_{j+1,k'}\subset\Cjk}g^{\ell}_{j+1,k'}\| \lesssim \sqrt{\rho(\Cjk)} \lesssim 2^{-jd/2}$. Given any fixed threshold $\tau>0$, in the truncated tree $\calT(\tau)$, the leaf nodes intersecting with $\Gamma$ satisfy $2^{-jd/2} \gtrsim \tau$. In other words, around $\Gamma$ the tree is truncated at a coarser scale than $j^\star$ such that $2^{-j^\star} ={O}(\tau^{\frac 2 d})$. The cardinality of $\calT(\tau)$ is dominated by the nodes intersecting with $\Gamma$, so
  $$\#\calT(\tau) \lesssim \frac{\rho(\Gamma) 2^{-j^\star(d-d_\Gamma)}}{2^{-j^\star d}} = \rho(\Gamma)2^{j^\star d_\Gamma} \lesssim \tau^{-\frac{2d_\Gamma}{d}},$$
  which implies $p = 2d_\Gamma/d$. We conclude that $g\in \calB^{\ell}_s$ with $s = \frac{d(2-p)}{2p} = \frac{d}{d_\Gamma} (d-d_\Gamma)/2.$
\end{proof}

 \begin{proof}[Lemma \ref{AsInfBs}]
By definition, we have $\|(f-\fjkell)\chijk\| \le |f|_{\ASLINF} 2^{-js} \sqrt{\rho(\Cjk)}$ as long as $f \in \ASLINF$. By splitting $(\Delta^\ell_{j,k})^2 \le 2\|(f - \fjkell)\chijk\|^2 + 2\sum_{k':C_{j+1,k'}\subset\Cjk} \|(f - f^\ell_{j+1,k'})\mathbf{1}_{j+1,k'}\|^2 $, we get
$$
  (\Delta^\ell_{j,k})^2 \le 4 |f|^2_{\ASLINF} 2^{-2js}\rho(\Cjk).
$$
In the selection of adaptive partitions, every $ \Cjk $ with $ \Deltajkell \ge \tau $ must satisfy
 $ \rho(\Cjk) \ge 2^{2js} (\tau/|f|_{\ASLINF})^2 $. 
 With extra assumptions $ \rho(\Cjk) \le \theta_0 2^{-jd} $ (true when the measure $\rho$ is doubling), we have
\begin{equation}  \label{eq:coarser}
\Deltajkell \ge \tau \Longrightarrow 
 2^{-j} \ge  \left(\frac{\tau}{|f|_{\ASLINF}}\right)^{\frac{2}{2s+d}}.
 \end{equation}
Therefore, every cell in $\Lam(\tau)$ will be at a coarser scale than $j^\star$ with $j^\star$ satisfying \eqref{eq:coarser}.
Using \ref{A3} we thus get
\begin{align*}
 &\tau^{p} \# \calT(\tau) \le \tau^{p} \amin \#\Lam_{j^\star}
  \le \theta_1^{-1} \tau^{p}  \amin 2^{j^\star d} 
 \le \frac{\amin  |f|_{\ASLINF}^{\frac{2d}{2s+d}}}{\theta_1}
\end{align*}
which yields Lemma \ref{AsInfBs}.
\end{proof}

\begin{proof}[Lemma \ref{lem:bias-Bs}]
 For any partition $ \Lam \subset \calT $, denote by $\Lam^{l}$ the $l$-th generation partition such that $\Lam^0 = \Lam$ and $\Lam^{l+1} $ consists of the children of $\Lam^l$.
We first prove that $ \lim_{l\to\infty} f^\ell_{\Lam^l} = f $ in $ L^2(\calM) $.
Suppose $ f \in L^\infty $.
Notice that $ \| f^\ell_{\Lam^l} - f \| \le \| f_{\Lam^l}^0 - f \| $. 
As a result of the Lebesgue differentiation theorem, $ f_{\Lam^l}^0 \to f $ almost everywhere.
Since $f$ is bounded, $f_{\Lam^l}^0$ is uniformly bounded, hence $ f_{\Lam^l}^0 \to f $ in $L^2(\calM)$ by the  dominated convergence theorem.
In the case where $ f \in \calA_t^\ell $, taking the uniform partition $\Lam_{j(l)}$ at the coarsest scale of $\Lam^l$, denoted by $j(l)$, we have
$  \|f - f^\ell_{\Lam^l}\| \le \|f - f^\ell_{\Lam_{j(l)}} \| \lesssim 2^{-j(l)t} $, and therefore $ f^\ell_{\Lam^l} \to f $ in $L^2(\calM)$.

Now, setting $ \Lam = \Lam(\tau) $ and $ \calS := \calT(\tau)\setminus\Lam $, by Definitions \ref{def:Bs} and \ref{def:quasi-ortho} we get
\begin{align*}
 \| f^\ell_{\Lam} - f \|^2 
&= \left\| \sum_{l=0}^{L-1} \left(f^\ell_{\Lam^l} - f^\ell_{\Lam^{l+1}}\right)  + f^\ell_{\Lam^L} - f \right\|^2
 = \left\| \sum_{l=0}^\infty (f^\ell_{\Lam^l} - f^\ell_{\Lam^{l+1}}) \right\|^2 \\
 &= \left\| \sum_{\Cjk\in\calT\setminus\calS} \Wjk^\ell \right\|^2 
\le B_0 \sum_{\Cjk \in \calT\setminus \calS} \|\Wjk^\ell\|^2 \\
 &= B_0 \sum_{l=0}^\infty \sum_{\Deltajkell \in [2^{-(l+1)}\tau,2^{-l}\tau)} \Delta_{j,k}^2 
 \le B_0 \sum_{l=0}^\infty 2^{-2 l} \tau^2 \#\calT(2^{-(l+1)}\tau) \\
 &= B_0 2^p \tau^{2-p} \sum_{l=0}^\infty 2^{-(2-p)l}  |f|_{\BSL}^p 
 \le B_0 2^p |f|_{\BSL}^p \tau^{2-p} \sum_{l=0}^\infty 2^{-(2-p) l} ,
\end{align*}
which yields the first inequality in Lemma \ref{lem:bias-Bs}.
The second inequality follows by observing that $ 2-p = \frac{2s}{d} p $ and
$ |f|_{\BSL}^p \tau^{2-p} = |f|_{\BSL}^2 (|f|_{\BSL}^{-p} \tau^p)^{\frac{2s}{d}} \le |f|_{\BSL}^2 \#\left[\calT(\tau)\right]^{-\frac{2s}{d}} $ by Definition \ref{def:Bs}.
\end{proof}

\begin{proof}[Lemma \ref{lem:rhocsigma}]
See \cite{LiaoMaggioni}.
\end{proof}

\begin{proof}[Lemma \ref{lem:Qr}]
 \ref{Q-hQ}.
 Thanks to \cite[Theorem 3.2]{TSVD} and assumption \ref{A5}, we have
\begin{align*}
 \| \Qjk - \hQjk \| &= \| [\Sjk]_d^{\dag} - [\hSjk]_d^{\dag} \| \le 3 \tfrac{\| \Sjk - \hSjk \|}{(\lamjk_d - \lamjk_{d+1} - \|\Sjk - \hSjk\|)^2}  
  \le 3\tfrac{\| \Sjk - \hSjk \|}{\left( \frac{\theta_3}{2d}2^{-2j} - \|\Sjk - \hSjk\|\right)^2} .
\end{align*}
Hence, the bound follows applying Lemma \ref{lem:rhocsigma}\ref{sigma-hsigma} with $ t = \frac{\theta_3}{4d}2^{-2j} $.

\ref{hQ}.
Observe that
$ \|\hQjk\| \le \|[\hSjk]_d^{\dag}\| = (\hlamjk_d)^{-1}$.
Moreover,
$ \hlamjk_d \ge \lamjk_d - |\lamjk_d - \hlamjk_d| \ge \frac{\theta_3}{d}2^{-2j} - \|\Sjk - \hSjk\|$
by assumption \ref{A5}.
Thus, using Lemma \ref{lem:rhocsigma}\ref{sigma-hsigma} with $ t = \frac{\theta_3}{2d}2^{-2j} $ yields the result.

\ref{r-hr}.
We condition on the event that $ \hnjk \ge \frac{1}{2}\EE\hnjk = \frac{1}{2}n\rho(\Cjk) $,
whose complement occurs with probability lower than $ 2\exp\left(-\frac{3}{28}n\rho(\Cjk)\right) $ by Lemma \ref{lem:rhocsigma}\ref{rho-hrho}.
The quantity $ \| \rjk - \hrjk \| $ is bounded by $ A + B + C + D $ with
\begin{align*}
 &A := \left\| \frac{1}{\hnjk} \sum_{i=1}^{n} \left( f(x_i) \begin{bmatrix} x_i - \cjk \\ 2^{-j} \end{bmatrix} - \frac{1}{\rho(\Cjk)} \int_{\Cjk} f(x) \begin{bmatrix} x - \cjk \\ 2^{-j} \end{bmatrix} d\rho \right) \chijk(x_i) \right\| \\
 &B := \left\| \frac{1}{\hnjk} \sum_{i=1}^{n} f(x_i) \begin{bmatrix} \cjk - \hcjk \\ 0 \end{bmatrix} \chijk(x_i) \right\| \\
 &C := \left\| \frac{1}{\hnjk} \sum_{i=1}^{n} \zeta_i \begin{bmatrix} x_i - \cjk \\ 2^{-j} \end{bmatrix} \chijk(x_i) \right\| \\
 &D := \left\| \frac{1}{\hnjk} \sum_{i=1}^{n} \zeta_i \begin{bmatrix} \cjk - \hcjk \\ 0 \end{bmatrix} \chijk(x_i) \right\| .
\end{align*}
Each term of the sum in $A$ has expectation $0$ and bound $2\langle\theta_2\rangle \|f\|_\infty 2^{-j}$.
Thus, applying the Bernstein inequality \cite[Corollary 7.3.2]{TroppNIPS} we obtain
$$ \PP \{ A > t \} \le 8 \exp\left( - c \tfrac{n\rho(\Cjk)t^2}{\langle\theta_2\rangle^2 \|f\|_\infty^2 2^{-2j} + \langle\theta_2\rangle \|f\|_\infty 2^{-j} t} \right) . $$
$B$ is bounded by $ \|f\|_\infty \|\cjk -  \hcjk\| $
so that, using \ref{lem:rhocsigma}\ref{c-hc} with $t$ replaced by $ t / \|f\|_\infty $, we get
$$ \PP \{ B > t \} \le 2\exp\left(-\tfrac{3}{28}n\rho(\Cjk)\right) + 8\exp\left(-c\tfrac{n\rho(\Cjk) t^2}{\theta_2^2 \|f\|_\infty^2 2^{-2j} + \theta_2 \|f\|_\infty 2^{-j}t} \right) . $$

To estimate $C$ we appeal to \citet[Theorem 3.1, Remark 4.2]{Buldygin-Pechuk}.
For $ X \in \RR^n $, take $ G(X) := \|MX\| $ with $ M := \left[\begin{smallmatrix} x_1 - \cjk \\ 2^{-j} \end{smallmatrix} \dots \begin{smallmatrix} x_n - \cjk \\ 2^{-j} \end{smallmatrix} \right] $.
Then $ |\partial_i G(X)| \le \|x_i - \cjk\| \le \theta_2 2^{-j} $. Now let $ X = (\zeta_1 \chijk(x_1),\dots,\zeta_n \chijk(x_n))^T $, so that $ C = G(X) / \hnjk $.
Since the $\zeta_i$'s are independent, \citet[Remark 4.2]{Buldygin-Pechuk} applies, and it yields $ \PP \left\{ G(X) > t \right\} \le 2 \exp\left( - \frac{t^2}{2 \sigma^2} \right) $,
where $ \sigma^2 = \sum_{i=1}^n \|\partial_iG\|_\infty^2 \|\zeta_i\|_{\psi_2}^2 \chijk(x_i) \le \hnjk \theta_2^2 2^{-2j} \|\zeta\|_{\psi_2}^2 $, and thus
$$ \PP \left\{ C > t \right\} \le 2 \exp\left( - \tfrac{n\rho(\Cjk)t^2}{2 \theta_2^2 \|\zeta\|_{\psi_2}^2 2^{-2j}} \right) . $$

We are left with $D$. This term is smaller than $ \|\cjk - \hcjk\| \left| \frac{1}{\hnjk} \sum_{i=1}^{n} \zeta_i \chijk(x_i) \right| $,
where, by Lemma \ref{lem:rhocsigma}\ref{c-hc}, $ \|\cjk - \hcjk\| \le \theta_2 2^{-j} $ with probability higher than $ 1 - 10 \exp\left( - \frac{3}{28} n \rho(\Cjk) \right) $.
Hence, by the standard sub-Gaussian tail inequality \cite[Proposition 5.10]{Vershynin:NARMT} we have
$$ \PP \{ D > t \} \le 10 \exp\left( - \tfrac{3}{28} n \rho(\Cjk) \right) + e \exp\left( - c \tfrac{n\rho(\Cjk)t^2}{\theta_2^2 \|\zeta\|_{\psi_2}^2 2^{-2j}} \right) . $$
This completes the proof.
\end{proof}

\begin{proof}[Lemma \ref{lem:fjk}]
If $ \ell = 0 $, then $ \| \hfjkell - \hfjkell \|_\infty = | \yjk - \hyjk | $, which is less than
$$
 \left| \frac{1}{\hnjk}\sum_{i=1}^{n} \left( f(x_i) - \frac{1}{\rho(\Cjk)} \int_{\Cjk} f(x) d\rho(x) \right) \chijk(x_i) \right| +
 \left| \frac{1}{\hnjk}\sum_{i=1}^{n} \zeta_i \chijk(x_i) \right| .
$$
Each addend in the first term has expectation $0$
 and bound $ 2 \|f\|_\infty $,
and therefore we can apply the standard Bernstein inequality \cite[Theorem 1.6.1]{TroppNIPS}.
As for the second term, we use the standard sub-Gaussian tail inequality \cite[Proposition 5.10]{Vershynin:NARMT}.
This yields the bounds for $ \ell = 0 $.

For $ \ell = 1 $, we have
\begin{align*}
&|\fjk(x)-\hfjk(x)|
\le  |[(x-\cjk)^T \ 2^{-j}]^T \Qjk \rjk
- [(x-\hcjk)^T \ 2^{-j}]^T \hQjk \hrjk |
\\
\le \ &| \begin{bmatrix}(\cjk-\hcjk)^T & 0 \end{bmatrix} \Qjk\rjk | + | \begin{bmatrix}(x-\hcjk)^T & 2^{-j} \end{bmatrix} (\Qjk\rjk - \hQjk\hrjk) | \\
                                     \le \ &\| \cjk - \hcjk \| \ \|\Qjk\| \ \|\rjk\| + \|\begin{bmatrix}(x-\hcjk)^T & 2^{-j}\end{bmatrix}\| \ \|\Qjk\rjk - \hQjk\hrjk \| \\
                                     \le \ &\| \cjk - \hcjk \| \ \|\Qjk\| \ \|\rjk\| + \|\begin{bmatrix}(x-\hcjk)^T & 2^{-j}\end{bmatrix}\| \ \left( \| \Qjk - \hQjk \| \ \|\rjk\| + \|\hQjk\| \|\rjk-\hrjk\| \right) \\
                                     \lesssim \ &\tfrac{\theta_2^2}{\theta_3} \left( d \|f\|_\infty 2^j \| \cjk - \hcjk \| + d^2 M 2^{2j} \| \Sjk - \hSjk \| + d 2^{j} \|\rjk-\hrjk\|\right),
 \end{align*}
where the last inequality holds
with high probability thanks to Lemma \ref{lem:Qr}\ref{Q-hQ}\ref{hQ}.
Thus, applying Lemma \ref{lem:rhocsigma}\ref{c-hc}\ref{sigma-hsigma} and Lemma \ref{lem:Qr}\ref{r-hr} with $t$ replaced by $ \frac{t}{\theta d M2^j} $,
$ \frac{t}{\theta d^2 M2^{2j}} $ and $ \frac{t}{\theta d 2^j} $,
we obtain the desired result.
\end{proof}

\begin{proof}[Lemma \ref{lem:fjk-fj'k'}]
Follows simply by truncation.
\end{proof}

\begin{proof}[Lemma \ref{lem:Delta-hDelta}]
 We start with \ref{Delta>hDelta<}. Defining $ \bDeltajkell := \| \Wjk^\ell \|_n $ we have
\begin{align*}
 &\PP \left\{ \hDeltajkell < \eta \ \text{ and } \ \Deltajkell \ge (2\amax+5) \eta  \right\} \\
 \le \ &\PP \left\{ \hDeltajkell < \eta \ \text{ and } \ \bDeltajkell \ge (\amax+2)\eta  \right\} + \PP \left\{ \bDeltajkell < (\amax+2)\eta \ \text{ and } \ \Deltajkell \ge (2\amax+5)\eta  \right\} \\
 \le \ &\PP \left\{ | \bDeltajkell - \hDeltajkell | \ge (1+\amax)\eta  \right\} + \PP \left\{ | \Deltajkell - 2\bDeltajkell | \ge \eta  \right\} .
\end{align*}
The first quantity can be bounded by
\begin{align*}
 | \bDeltajkell - \hDeltajkell | &\le \| \Wjk^\ell - \hWjk^\ell \|_n \le \| \fjkell - \hfjkell \|_n + \sum_{C_{j+1,k'}\subset\Cjk} \| f^\ell_{j+1,k'} - \hf^\ell_{j+1,k'} \|_n \\
 &\le \|\fjkell - \hfjkell\|_\infty \sqrt{\hrho(\Cjk)} + \sum_{C_{j+1,k'}\subset\Cjk} \|f^\ell_{j+1,k'} - \hf^\ell_{j+1,k'}\|_\infty \sqrt{\hrho(C_{j+1,k'})} ,
\end{align*}
so that
\begin{align*}
      &\PP \left\{ | \bDeltajkell - \hDeltajkell | \ge (1+\amax)\eta  \right\} \\
 \le \ &\PP \left\{ \| \fjkell - \hfjkell \|_\infty \sqrt{\hrho(\Cjk)} \ge \eta \right\} + \sum_{C^{j+1,k'}\subset\Cjk} \PP \left\{ \| f^\ell_{j+1,k'} - \hf^\ell_{j+1,k'}\|_\infty \sqrt{\hrho(C_{j+1,k'})} \ge \eta \right\} .
\end{align*}
We now condition on the event that $ |\rho(\Cjk) - \hrho(\Cjk)| \le \frac{1}{2}\rho(\Cjk) $, which entails $ \hrho(\Cjk) \le \frac{3}{2} \rho(\Cjk) $,
and apply Lemma \ref{lem:fjk} with $ t \lesssim \eta/\sqrt{\rho(Cjk)} $.
The probability of the complementary event is bounded by Lemma \ref{lem:rhocsigma}\ref{rho-hrho}.
To get rid of the remaining $\rho(\Cjk)$'s inside the exponentials, we lower bound $\rho(\Cjk)$ as follows.
We have
\begin{equation} \label{eq:Delta2<M2rho}
 (\Deltajkell)^2 \le 4\|f\|_\infty^2 \rho(\Cjk) ,
\end{equation}
Thus, $ \Deltajkell \ge (2a+5)\eta $ implies $ \rho(\Cjk) \ge \frac{(2a+5)^2\eta^2}{4\|f\|_\infty^2} $.
Therefore, we obtain that
\begin{align*}
 \PP \{ | \bDeltajkell - \hDeltajkell | \ge (1+\amax)\eta \}  \le \begin{cases}
  C_0 \exp\left( - \tfrac{n\eta^2}{c_0 \max\{\|f\|_\infty^2,\|\zeta\|_{\psi_2}^2\}} \right) & \ell = 0 \\
  C_1 d \exp\left( - \tfrac{n\eta^2}{c_1 \max\{d^4\|f\|_\infty^2,d^2\|\zeta\|_{\psi_2}^2\}} \right) & \ell = 1 ,
 \end{cases} 
 \end{align*}
where $ C_0,c_0 $ depend on $a$,  and $ C_1,c_1 $ depend on $\amax,\theta_2,\theta_3$.

Next we estimate $ \PP \left\{ \Deltajkell - 2\bDeltajkell \ge \eta  \right\} $ by \citet[Theorem 11.2]{GKKW}.
Notice that for all $x\in \calM$, $ |\Wjk^\ell(x)| \lesssim \|f\|_\infty $.
If $ x \notin \Cjk $, then $ \Wjk(x) = 0 $, otherwise there is $k'$ such that $ x \in \calC_{j+1,k'} \subset \Cjk $. In such a case, $ |\Wjk(x)| = | \fjk(x) - f_{j+1,k'}(x) | $, and the claim follows from Lemma \ref{lem:fjk-fj'k'}.
Thus, \cite[Theorem 11.2]{GKKW} gives us
$$ \PP \left\{ \Deltajkell - 2\bDeltajkell \ge \eta  \right\} \lesssim \exp\left( - \tfrac{n\eta^2}{c \|f\|_\infty^2 } \right) , $$
where $c$ is an absolute constant.

Let us turn to \ref{Delta<hDelta>}.
We first observe that
\begin{equation}  \label{eq:hDelta<Msqrt(hrho)}
 \hDeltajkell \lesssim M \sqrt{\hrho(\Cjk)} .
\end{equation}
To see this, note again that $ \hWjk(x) \ne 0 $ only when $ x \in \calC_{j+1,k'} \subset \Cjk $ for some $k'$,
in which case $ |\hWjk(x)| = | \hfjk(x) - \hf_{j+1,k'}(x) | $ and we can apply Lemma \ref{lem:fjk-fj'k'}.
Now note that $ b = 2\amax+5 $. We have
\begin{align*}
 \PP \left\{\Deltajkell < \eta \ \text{ and } \ \hDeltajkell \ge b\eta  \right\} &\le \PP \left\{\Deltajkell < \eta , \ \hDeltajkell \ge b\eta \ \text{ and } \ \rho(\Cjk) \ge \frac{b^2\eta^2}{2\|f\|_\infty^2} \right\} \\
 &+ \PP \left\{ \rho(\Cjk) < \frac{b^2\eta^2}{2\|f\|_\infty^2} \ \text{ and} \ \hrho(\Cjk) \ge \frac{b^2\eta^2}{\|f\|_\infty^2} \right\} \\
 &+ \PP \left\{ \hrho(\Cjk) < \frac{b^2\eta^2}{\|f\|_\infty^2} \ \text{ given } \ \hDeltajkell \ge b\eta \right\} .
\end{align*}
The first probability can be estimated similarly to how we did for \ref{Delta>hDelta<}.
Thanks to Lemma \ref{lem:rhocsigma}\ref{rho-hrho'}, the second probability is bounded by
$$ \PP\left\{ \rho(\Cjk) < \frac{b^2  \eta^2}{2\|f\|_\infty^2}
\ \text{ and }
\ |\hrho(\Cjk)-\rho(\Cjk)| > \frac{b^2  \eta^2}{2\|f\|_\infty^2}
\right\}
\lesssim \exp\left(-\tfrac{b^2  n\eta^2}{c\|f\|_\infty^2 }\right) $$
for an absolute constant $c$.
Finally, the third probability is zero thanks to \eqref{eq:hDelta<Msqrt(hrho)}.
\end{proof}

\end{appendices}

%%%%%%%%%%%%%%%%%%%%%%%%%%%%%%%%%%%%%%%%%%%%%%%%%%%%%%%%%%%%%%%
%%%%%%%%%%%%%%%%%%%%%%%%%%%%%%%%%%%%%%%%%%%%%%%%%%%%%%%%%%%%%%%
%%%%%%%%%%%%%%%%%%%%%%%%%%%%%%%%%%%%%%%%%%%%%%%%%%%%%%%%%%%%%%%

\end{document}